\documentclass[12pt, final]{l4dc2022}



\usepackage[normalem]{ulem}

\newcommand{\norm}[1]{\left\lVert#1\right\rVert}

\newcommand{\Dist}[1]{\mathrm{dist}\left(#1 \right) }

\newcommand{\Neg}[1]{\mathrm{neg}\left(#1 \right) }
\newcommand{\Pos}[1]{\mathrm{pos}\left(#1 \right) }

\newcommand{\Paren}[1]{\left( #1 \right) }
\newcommand{\Brace}[1]{\left\{ #1 \right\} }
\newcommand{\Graph}[0]{{G_\theta}}
\newcommand{\GraphDistance}[0]{d_{\Graph}}

\newcommand{\ViolationLoss}[0]{l^\theta_{\text{vimp}}}

\usepackage{csquotes}

\usepackage{float}

\usepackage{array}

\usepackage[font=small]{caption}


\title[Generalization Bounded Implicit Learning of Nearly Discontinuous Functions]{Generalization Bounded Implicit Learning of \\Nearly Discontinuous Functions}
\usepackage{times}

\author{%
    \Name{Bibit Bianchini} \Email{bibit@seas.upenn.edu}\\
    \Name{Mathew Halm} \Email{mhalm@seas.upenn.edu}\\
    \Name{Nikolai Matni} \Email{nmatni@seas.upenn.edu}\\
    \Name{Michael Posa} \Email{posa@seas.upenn.edu}\\
    \addr University of Pennsylvania, Philadelphia, PA 19104
}

\begin{document}

\maketitle

\begin{abstract}%
    Inspired by recent strides in empirical efficacy of implicit learning in many robotics tasks, we seek to understand the theoretical benefits of implicit formulations in the face of nearly discontinuous functions, common characteristics for systems that make and break contact with the environment such as in legged locomotion and manipulation.  We present and motivate three formulations for learning a function:  one explicit and two implicit.  We derive generalization bounds for each of these three approaches, exposing where explicit and implicit methods alike based on prediction error losses typically fail to produce tight bounds, in contrast to other implicit methods with violation-based loss definitions that can be fundamentally more robust to steep slopes.  Furthermore, we demonstrate that this violation implicit loss can tightly bound graph distance, a quantity that often has physical roots and handles noise in inputs and outputs alike, instead of prediction losses which consider output noise only.  Our insights into the generalizability and physical relevance of violation implicit formulations match evidence from prior works and are validated through a toy problem, inspired by rigid-contact models and referenced throughout our theoretical analysis.%
\end{abstract}

\begin{keywords}%
    Implicit learning, contact dynamics, learning nearly discontinuous functions, generalization error bounds%
\end{keywords}

\section{Introduction}
Extreme stiffness or even discontinuity is abundant in physical robotics tasks in which systems make or break contact with the environment \citep{yang2021impact, ubellacker2021verifying, khader2020data, kolev2015physically}.  Implicit learning is increasingly common to represent these complex functions, as are loss formulations with embedded optimization problems \citep{florence2022implicit, de2018end, fazeli2017parameter, fazeli2017learning, amos2017optnet}.  These approaches have demonstrated significant empirical performance benefits over explicit approaches in real robotics tasks, in agreement with the effectiveness of implicit models in other fields like trajectory optimization \citep{posa2014direct, contact_johnson_2019} and rigid body dynamics modeling \citep{stewart1996implicit, anitescu1997formulating, chatterjee1998new}.  In particular, ContactNets \citep{pfrommer2020} learns frictional contact behaviors implicitly, using a deep neural network (DNN) to represent inter-body distances and contact geometries.  The empirical results from ContactNets show significantly improved sample complexity over explicit alternatives.

Other works motivate this implicit learning trend by exposing inadequacies of explicit approaches for nearly discontinuous functions \citep{parmar2021fundamental}.  Our goal is to expose why implicit formulations are often better suited for these tasks, through the lenses of generalizability and relationship to graph distance, which we will motivate as a physically meaningful quantity.  We focus on a specific class of functions $f$ that can be stiff.  This complicated, vector-valued, deterministic function $y = f(x)$ has structure such that it can be formulated as an implicit optimization problem defined by the following system of equations,
\begin{subequations}
\begin{align}
    y &= g (x, \lambda)\label{eqn:implicit_y_from_g} \\
    \text{such that} \quad \lambda &= \underset{\lambda \in \Lambda}{\arg \min} \, \, h \left(x, y, \lambda \right). \label{eqn:h_min}
\end{align}
\end{subequations}
This is a general formulation which can represent classes of smooth and discontinuous functions alike.  We aim to understand the value of learning these implicit functions $g$ and $h$ over learning the function $f$ directly and how loss designs can affect how beneficial implicit formulations can be.

\subsection{Contributions and Outline}
We contribute two analyses of implicit formulations, applied to two implicit approaches as well as their explicit counterpart, introduced in Section \ref{sec:prob_form}.  First, we derive generalization error bounds for these three approaches in Section \ref{sec:gen_bounds}, drawing from extensive literature in uniform convergence \citep{shalev2014understanding, shalev2010learnability}, whose application to overparameterized DNNs has become better understood \citep{golowich2018size, neyshabur2018towards}.  With the results of these tools, we indicate a common failure mode of explicit approaches to generalize well, and more importantly we discuss why an implicit formulation can avoid this explosion of generalization error bounds even when learning a nearly discontinuous function.  These theoretical results validate the sample efficiency observed by implicit methods in practice.

Second, in Section \ref{sec:graph_dist}, we demonstrate that a violation-based implicit loss formulation can closely bound graph distance, a meaningful quantity grounded in the physical intuition for how to measure fit of a function.  We motivate graph distance as potentially a more useful measure than standard prediction accuracy, specifically when functions are highly stiff.  This analysis motivates selection of a hyperparameter $\epsilon$ introduced in the violation implicit loss.  Application of the generalization error bound analysis from Section \ref{sec:gen_bounds} indicates that at this choice of $\epsilon$, the violation implicit approach can boast significant sample complexity benefits over prediction methods.

We ground our theoretical analysis with the presentation of a toy problem introduced in Section \ref{sec:toy_prob} and continually referenced throughout.  This physically-motivated example provides demonstration of the utility of our general results:  we generate a concrete value for $\epsilon$ and demonstrate data efficient generalization error bounds separated by over two orders of magnitude in dataset size in comparison to explicit approaches.  We conclude in Section \ref{sec:conclusion} with avenues for future work to build upon our results.

\section{Problem Formulation} \label{sec:prob_form}
We seek to learn a function $f(x)$ given i.i.d. pairs drawn from a distribution $\{ z_i = (x_i, y_i) \}_i^n \sim \mathcal{D}$ such that $y_i \approx f(x_i)$.  We adopt an empirical risk minimization approach and consider the following three choices of model and training loss constructions.

\textbf{Explicit approach (exp):}   Learn $f^\theta (x)$ directly, with explicit loss (on a data point of index $i$)
\begin{align}
    l_{\text{exp}}^\theta (x_i, y_i) &= \norm{y_i - f^\theta (x_i)}^2. \label{eqn:loss_exp}
\end{align}
This is a common $l2$ or prediction loss many works in the literature employ including for regression \citep{fernandez2019extensive}, as it is a direct measure of the output error of a learned function acting on a provided input.
    
\textbf{Naive implicit approach (nimp):}  Learn the implicit $g^\theta (x, \lambda)$ and $h^\theta (x, y, \lambda)$ where $\lambda$ satisfies \eqref{eqn:h_min} for the learned $h$.  This loss is
\begin{subequations} \label{eqn:loss_nimp}
\begin{align}
    l_{\text{nimp}}^\theta (x_i, y_i) &= \norm{y_i - g^\theta (x_i, \lambda_i)}^2 \\
    \text{such that} \quad \lambda_i &= \underset{\lambda \in \Lambda}{\arg \min} \, \, h^\theta (x_i, g^\theta(x_i, \lambda), \lambda). \label{eqn:nimp_h_min}
\end{align}
\end{subequations}
Many prior works including \cite{amos2017optnet} take this approach, which requires differentiating through an $\arg \min$.  Like the explicit approach, this also measures the output error of a function acting on a provided input, only now the output also depends on a learned implicit variable which minimizes \eqref{eqn:nimp_h_min}.  The motivation for such a constraint is that it embeds complex relationships, often physically-motivated, between hidden states that explicit approaches do not supervise \citep{raissi2018hidden}.

We note at this point that if the parameters represent the same physical quantities across both approaches, the explicit loss is equivalent to the naive implicit loss.  \textbf{Prediction approach (pred)} refers jointly to both of these when their parameters are shared.
    
\textbf{Violation implicit approach (vimp):}  Learn the implicit $g^\theta (x, \lambda)$ and $h^\theta (x, y, \lambda)$ with loss
\begin{align}
    l_{\text{vimp}} ^\theta (x_i, y_i) &= \min_{\lambda \in \Lambda} \, \, \norm{y_i - g^\theta (x_i, \lambda)}^2 + \frac{1}{\epsilon} \, h^\theta (x_i, y_i, \lambda), \label{eqn:loss_imp}
\end{align}
which introduces a hyperparameter, $\epsilon$, that weights the relative importance between the two terms.  Together, this violation loss allows and balances violation of the prediction matching term defined by $g$ with the $\lambda$ constraints defined by $h$.  In contrast with the explicit and naive implicit approaches, Section \ref{sec:graph_dist} shows that this approach addresses errors in both the inputs and outputs of the functions.  Notably with a realizable $h$, this approach also maintains the true global minimum:  zero noise and a correct model correspond to zero loss.  Works including ContactNets \citep{pfrommer2020} employ this violation implicit structure.

\subsection{Notation and Assumptions}
Let $\mathcal{D}$ denote a distribution over input and output data points $\mathcal{Z} = (\mathcal{X}, \mathcal{Y})$, with $(x_1, y_1), \dots (x_n, y_n)$ as $n$ i.i.d. samples from $\mathcal{D}$.  We assume bounds on $\theta$ and $\lambda$ as $B_\theta$ and $B_\lambda$, respectively.  All other bounds we impose to be $\forall z \in \mathcal{D}, \forall \norm{\lambda} \leq B_\lambda, \forall \norm{\theta} \leq B_\theta$, including bounds on $f$, $g$, $h$, $l_{\text{exp}}$, $l_{\text{nimp}}$, and $l_{\text{vimp}}$ as $B_f$, $B_g$, $B_h$, $B_{\text{exp}}$, $B_{\text{nimp}}$, and $B_{\text{vimp}}$, respectively.

Let $\mathcal{F}$, $\mathcal{G}$, and $\mathcal{H}$ be parametric function spaces $f: \mathcal{X} \to \mathcal{Y}$, $g: \mathcal{X} \times \Lambda \to \mathcal{Y}$, and $h: \mathcal{X} \times \mathcal{Y} \times \Lambda \to \mathbb{R}^+$, respectively, for all parameters, $\theta$.  For example, the parametric function class is $\mathcal{F} = \{ f^\theta: \theta \in \mathbb{R}^k, \norm{\theta} \leq B_\theta, \frac{df}{d\theta} \text{ is bounded } \forall \norm{\theta} \leq B_\theta, \forall (x,y) \in \mathcal{Z} \}$, and similarly for $\mathcal{G}$ and $\mathcal{H}$.  We assume that $\min_{\lambda,x,y} h^\theta = 0$ and $\min_{\lambda} h^\theta(x,g^\theta(x,\lambda),\lambda) = 0, \, \forall x$.  All loss function classes, $\mathcal{L}$ are $l: \mathcal{X} \times \mathcal{Y} \to \mathbb{R}^+$, for all parameters, $\theta$, defining the $f$, $g$, and/or $h$ therein.

We define specific Lipschitz constants as $\text{Lip}_* d(x, .)$ to refer to the Lipschitz constant of a function $d$ with respect to $*$, allowing $*$ to change any inputs noted as '.' and holding all other inputs constant (in this example, $x$).  Those required are
\begin{subequations}
\begin{align}
    L_{f, \theta} &= \text{Lip}_\theta \, f(x; .), & L_{g, \theta} &= \text{Lip}_\theta \, g(x, \lambda; .), & L_{h, \theta} &= \text{Lip}_\theta \, h(x, y, \lambda; .), \\
    L_{g, \lambda} &= \text{Lip}_\lambda \, g(x, .; \theta), & L_{h, \lambda} &= \text{Lip}_\lambda \, h(x, y, .; \theta), & L_{\lambda, \theta}^{(\text{nimp})} &= \text{Lip}_\theta \, \lambda^*_{\text{nimp}}(x, y; .),
\end{align}
\end{subequations}
where $\lambda^*_{\text{nimp}}$ is the unique minimizing solution to \eqref{eqn:h_min} (similarly for defining $L_{\lambda, \theta}^{\text{vimp}}$ through $\lambda^*_{\text{vimp}}$ using $\norm{y - g}^2 + \frac{h}{\epsilon}$ instead of $h$).  If $\frac{\partial^2 h}{\partial \lambda^2}$ is invertible and if there are no constraints on $\lambda$, sensitivity analysis \citep{gould2016differentiating} shows that $L_{\lambda, \theta}^{(\text{nimp})}$ is given by
\begin{align}
    L_{\lambda, \theta}^{(\text{nimp})} &= \sup_{x, y, \lambda; \theta} \left[ \frac{d \lambda^*}{d \theta} \right] = \sup_{x, y, \lambda; \theta} \left[ - \left(\frac{\partial^2 h}{\partial \lambda^2} \right)^{-1} \frac{\partial^2 h}{\partial \theta \partial \lambda} \Bigg|_{x, y, \lambda; \theta} \right]. \label{eqn:lambda_theta_lipschitz_der}
\end{align}
Additionally, we will use the functions $\text{pos}(.) = \max(0,.)$ and $\text{neg}(.) = -\min(0,.)$ which represent the positive and negative part of their inputs. $\text{pos}(x)$ is equivalent to the rectified linear unit activation $\text{ReLU}(x) = \{ 0 \text{ if } x < 0, \, x \text{ if } x \geq 0 \}$ \citep{nair2010rectified}.

\subsection{Toy Problem} \label{sec:toy_prob}
To ground the general analysis with a physically-relevant manifestation, we present and refer to an example throughout this paper.  Consider the following toy problem:  a 1-dimensional point mass falling under gravity, with a flat ground whose interactions with the point mass are perfectly inelastic.  See Figure \ref{fig:toy} for a schematic of this scenario.

\begin{figure}[]
    \centering
    \includegraphics[width=0.2 \linewidth]{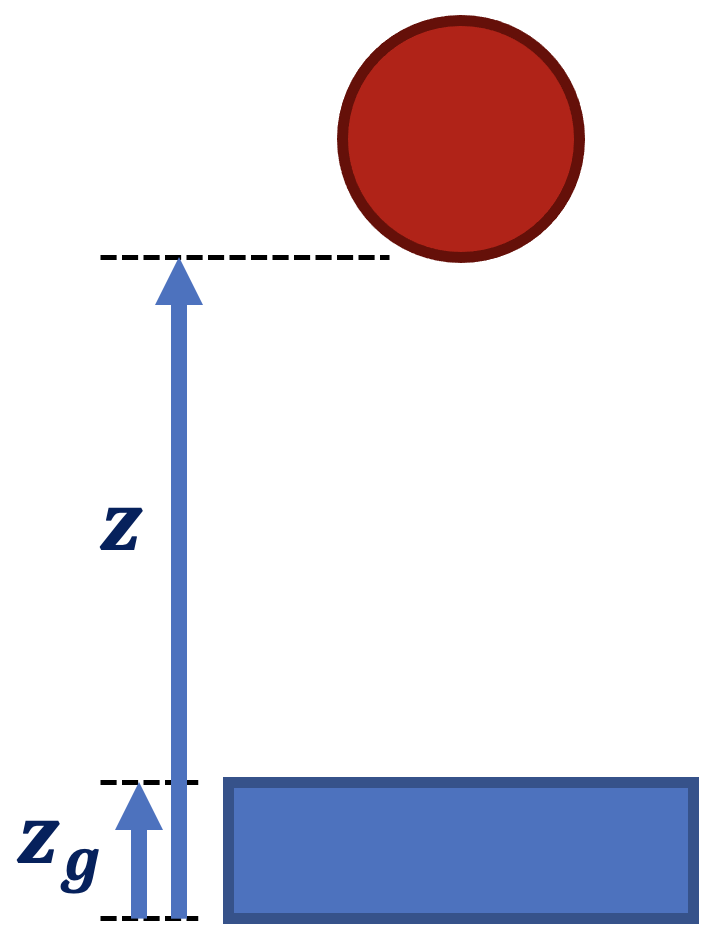}
    \includegraphics[width=0.75 \linewidth, trim={1cm 0cm 3cm 0cm}, clip]{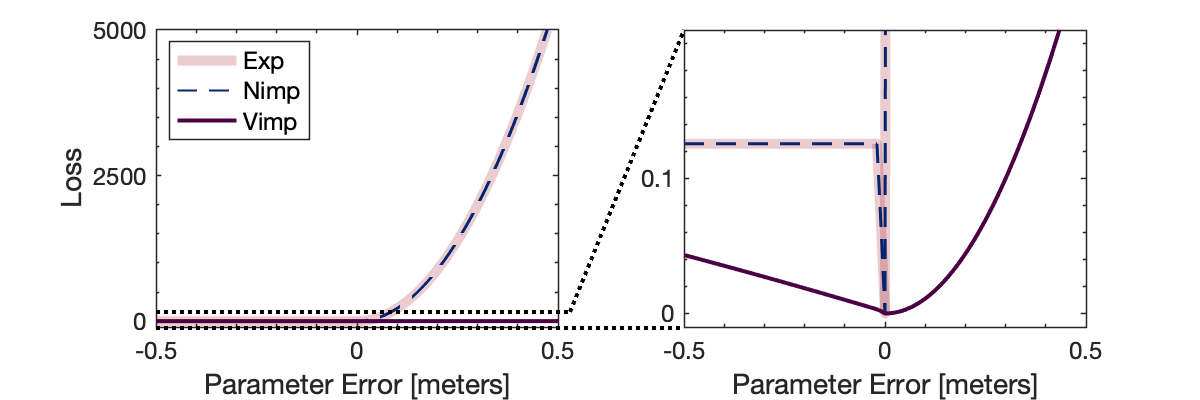}
    \caption{Toy problem of a 1-dimensional point mass falling under gravity until colliding inelastically with the ground.  The zeroed loss landscapes of the three approaches are depicted with a zoomed in view at right.  The explicit and naive implicit approach for this toy problem result in the same loss landscape.  The violation implicit approach in contrast features a more balanced ascent from its minimum on both sides of the optimal parameter value, better suited for optimization via stochastic gradient descent.}
    \label{fig:toy}
    \vspace{-0.5cm}
\end{figure}

We parameterize this problem with a scalar $\theta$ that represents the approximated height of the ground, such that $\theta^* = z_g$.  The state of the system $x = [z;\; v]$ is the point mass' position and velocity, and $y = v'$ is the velocity after a time step $\Delta t$.  When the mass is under free fall, it undergoes projectile motion due to gravity (assume other continuous forces are negligible).  Using rigid body time stepping simulation approaches via a linear complementarity program (LCP) \citep{stewart1996implicit}, $f^\theta(x)$ takes the form
\begin{align}
    f^\theta(x) &= v - a_{\text{grav}} \Delta t + \text{pos}\left( -v + a_{\text{grav}} \Delta t + \frac{\theta - z}{\Delta t} \right). \label{eqn:f_toy_def}
\end{align}

\subsubsection{Implicit Representation}
In the LCP rigid body simulation technique \citep{stewart1996implicit}, an implicit scalar variable $\lambda \geq 0$ corresponds to the contact impulse between the ground and the point mass.  The function $g$ produces a dynamics prediction from the state $x$ over the time step $\Delta t$ as
\begin{align}
    g^\theta (x, \lambda) &= v - a_{\text{grav}} \Delta t + \frac{1}{m} \lambda, \label{eqn:g_toy_def}
\end{align}
with $m$ as the system mass and where $\lambda$ satisfies \eqref{eqn:h_min} with an unconstrained $h$,
\begin{align}
    h^\theta \left(\left[ \begin{array}{c} z \\ v \end{array} \right], v', \lambda \right) &= \frac{1}{2} \text{neg}(z + v' \Delta t - \theta)^2 + \frac{1}{2} \text{neg}(\lambda)^2 + \text{pos}(z + v' \Delta t - \theta) \text{pos}(\lambda), \label{eqn:h_toy_def}
\end{align}
which enforces no contact forces at a distance, no pulling contact forces, and no interpenetration of the mass with the rigid ground at the end of the time step.

\section{Generalization Bounds} \label{sec:gen_bounds}
We wish to quantify a maximum bound for the generalization error of our general models.  Generalization error is the difference between an empirical error observed on training data and the expected error on the true distribution of data.  Define the generalization error $\Delta_{gen}^\mathcal{S} := l^\theta (\mathcal{D}) - l^\theta (\mathcal{S})$ for $l^\theta (\mathcal{D})$ the expected error on the true distribution of data (population risk), and $l^\theta (\mathcal{S})$ the measured error on a dataset $\mathcal{S} = \{z_i\}_{i=1}^n \sim \mathcal{D}^n$ (empirical risk).  \cite{bartlett2002rademacher} produce a bound on the generalization error in terms of Rademacher complexity \citep{shalev2014understanding} of the hypothesis function class, and application of Dudley's entropy integral \citep{dudley2014uniform} can convert Rademacher complexity into quantifiable properties of the parameters, loss formulation, and function class (details and proof of the following theorem in \ref{sec:prove_dudley}).  Combining these steps, we can bound generalization error via the following theorem.
\begin{theorem} \label{thm:generalization}
    Fix a failure probability $\delta \in (0, 1)$, and assume that the loss function $l_{\text{\normalfont approach}} \in \mathcal{L}_{\text{\normalfont approach}}: \mathcal{Z} \to [0, B_{\text{\normalfont approach}}] \}$, with $L_{\text{\normalfont approach}, \theta}$ as its Lipschitz constant with respect to its parameters $\theta \in \mathbb{R}^k$, is acting on a parametric hypothesis function class
    with data $\mathcal{S} = \{ z_1, \dots z_n \} \sim \mathcal{D}^n$.  Then with probability at least $1-\delta$ and $\forall l_{\text{\normalfont approach}} \in \mathcal{L}_{\text{\normalfont approach}}$ and for all hypothesis functions in the class,
    \begin{equation}
        \Delta_{\text{\normalfont gen, approach}}^\mathcal{S} \leq 44 L_{\text{\normalfont approach}, \theta} B_\theta \sqrt{\frac{k}{n}} + B_{\text{\normalfont approach}} \sqrt{\frac{\log(1/\delta)}{2n}}. \label{eqn:generalization}
    \end{equation}
\end{theorem}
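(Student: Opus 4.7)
The plan is to follow the standard Rademacher--Dudley pipeline, using the Lipschitz assumption on the loss together with a parameter-space covering to convert everything into the explicit $\sqrt{k/n}$ rate stated in \eqref{eqn:generalization}.

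First, I would invoke the classical uniform-convergence bound of \cite{bartlett2002rademacher}: since $l_{\text{approach}}$ takes values in $[0, B_{\text{approach}}]$, McDiarmid's inequality combined with the symmetrization argument gives, with probability at least $1-\delta$,
\begin{equation*}
    \Delta_{\text{gen, approach}}^{\mathcal{S}} \;\leq\; 2 \mathcal{R}_n(\mathcal{L}_{\text{approach}}) + B_{\text{approach}} \sqrt{\frac{\log(1/\delta)}{2n}},
\end{equation*}
where $\mathcal{R}_n$ denotes the empirical Rademacher complexity. This immediately accounts for the second term in \eqref{eqn:generalization}, so the remaining work is to bound $\mathcal{R}_n(\mathcal{L}_{\text{approach}})$ by $22\, L_{\text{approach},\theta} B_\theta \sqrt{k/n}$.

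Second, I would control $\mathcal{R}_n$ via Dudley's entropy integral: for any $\alpha > 0$,
\begin{equation*}
    \mathcal{R}_n(\mathcal{L}_{\text{approach}}) \;\leq\; 4\alpha + \frac{12}{\sqrt{n}} \int_\alpha^{B_{\text{approach}}} \sqrt{\log N(\mathcal{L}_{\text{approach}}, \rho, \|\cdot\|_{L_2(\mathcal{S})})} \, d\rho.
\end{equation*}
To bound the covering number, I would transfer a cover from parameter space to loss space using the Lipschitz hypothesis: if $\Theta$ is covered at scale $\rho / L_{\text{approach},\theta}$ in $\|\cdot\|_2$ by $N_\Theta$ balls, then $\mathcal{L}_{\text{approach}}$ is covered at scale $\rho$ in the (stronger) sup norm, and hence also in empirical $L_2$. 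A standard volumetric bound on a ball of radius $B_\theta$ in $\mathbb{R}^k$ then gives
\begin{equation*}
    \log N(\mathcal{L}_{\text{approach}}, \rho, \|\cdot\|_{L_2(\mathcal{S})}) \;\leq\; k \log\!\Bigl(\tfrac{3 B_\theta L_{\text{approach},\theta}}{\rho}\Bigr).
\end{equation*}

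Third, I would substitute this entropy estimate into Dudley's integral, change variables $u = \rho / (B_\theta L_{\text{approach},\theta})$, and use $\int_0^1 \sqrt{\log(3/u)}\, du \leq C$ for an absolute constant to obtain a bound of the form $C' L_{\text{approach},\theta} B_\theta \sqrt{k/n}$ after optimizing $\alpha$ (choosing $\alpha$ on the order of $1/\sqrt{n}$ to kill the lower-limit term). Combining with the factor of $2$ from symmetrization and tracking constants gives the final coefficient $44$ in front of the complexity term.

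The main obstacle is purely bookkeeping: tracking the constants through Dudley's integral carefully enough to arrive at the stated $44$ rather than a larger absolute constant, and verifying that the Lipschitz-in-$\theta$ hypothesis on $l_{\text{approach}}$ suffices to convert a parameter cover into an empirical $L_2$ cover uniformly over $\mathcal{S}$. Everything else is a direct plug-in of standard tools; the paper defers the detailed calculation to \ref{sec:prove_dudley}, which is where this constant-chasing naturally lives.
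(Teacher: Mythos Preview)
Your proposal is correct and follows essentially the same route as the paper: the Bartlett--Mendelson symmetrization bound, then Dudley's entropy integral, then transferring a Euclidean parameter-ball cover to a loss-class cover via the Lipschitz-in-$\theta$ assumption, and finally the volumetric covering estimate in $\mathbb{R}^k$. The only cosmetic difference is that you invoke the truncated form of Dudley (with the $4\alpha + \tfrac{12}{\sqrt{n}}\int_\alpha^\cdot$ constants) and then optimize $\alpha$, whereas the paper uses the $\tfrac{16}{\sqrt{n}}\int_0^\cdot$ form directly; both lead to the same $22\,L_{\text{approach},\theta} B_\theta \sqrt{k/n}$ bound on the Rademacher term.
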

This approach is most suitable for underparameterized models ($n > k$).  Similar analysis better suited for overparameterized DNNs exists \citep{golowich2018size, neyshabur2018towards} but is not used in this section.  With Theorem \ref{thm:generalization}, generalization bounds for each approach can be reduced to the loss Lipschitz constant with respect to the function class parameters, $\theta$.  Applying Theorem \ref{thm:generalization} and differentiating through the embedded optimization problems, we can compute these Lipschitz constants for the three losses as
\begin{align}
    L_{\text{exp}, \theta} &= 2 B_{\text{exp}} L_{f, \theta}, \label{eqn:l_theta_exp} \\
    L_{\text{nimp}, \theta} &= 2 B_{\text{nimp}} \left( L_{g, \lambda} L_{\lambda, \theta}^{(\text{nimp})} + L_{g, \theta} \right), \label{eqn:l_theta_nimp} \\
    L_{\text{vimp}, \theta} &= 2B_{\text{nimp}} \left( L_{g, \lambda} L_{\lambda, \theta}^{(\text{vimp})} + L_{g, \theta} \right) + \frac{1}{\epsilon} \left( L_{h, \lambda} L_{\lambda, \theta}^{(\text{vimp})} + L_{h, \theta} \right). \label{eqn:l_theta_imp}
\end{align}
Full derivations of these Lipschitz constants are provided in Appendix \ref{apx:gen_bounds}.

For a steep function $f$, the difficulty in generalization can be clearly seen in the presence of $L_{f, \theta}$ in \eqref{eqn:l_theta_exp}.  For the naive implicit form, particularly in problems like our toy example where $\lambda$ defines that stiffness, this difficulty has been shifted to $L_{\lambda, \theta}^{(\text{nimp})}$ in \eqref{eqn:l_theta_nimp}.  This corresponds to a poorly conditioned $\frac{\partial^2 h}{\partial \lambda^2}$, whose inverse appears in \eqref{eqn:lambda_theta_lipschitz_der}.  The violation implicit approach, on the other hand, can avoid this problem because $g$ can act as a regularizing term on $h$:  the second partial derivative of $\norm{y - g}^2 + \frac{h}{\epsilon}$ with respect to $\lambda$ can be much better conditioned than that of $h$ on its own, allowing $L_{\lambda, \theta}^{(\text{vimp})}$ to be small even when $L_{\lambda, \theta}^{(\text{nimp})}$ is not.  Here we remember the role of the hyperparameter $\epsilon$:  at large values, it relaxes the optimality criteria in \eqref{eqn:h_min}; at small values, the violation implicit approach approximates the naive implicit and thus explicit approaches.

It is important to note that the primary advantage of the naive implicit form over the explicit form can come from a difference in parameterizations.  Parameters in the form of neural network weights and biases, for example, are not shared across different approaches, so the Lipschitz constants and thus generalization error bounds would differ for DNNs learning $f$ versus $g$ and $h$.

\subsection{Generalization Bounds for Toy Problem}
This difference, however, is not present in our toy problem, where the parameter $\theta$ represents ground height in all three approaches.  This demonstrates the possible generalization error bound advantage of the violation implicit approach over the alternatives.  The required loss Lipschitz constants with respect to the learned parameters are given by \eqref{eqn:l_theta_exp}-\eqref{eqn:l_theta_imp}.  Values for all of the Lipschitz constants are provided in Table \ref{table:toy_constants} in algebraic and numerical form, given the following parameter choices:  $\phi_{\max} = 8$ meters, $v_{\max} = 15$ m/s, $a_{\text{grav}} = 9.81$ m/s$^2$, $\Delta t = 0.005$ seconds, $m = 1$ kilogram, and $\lambda_{\max} = m (v_{\max} + g \Delta t) = 15.05$ Newton-seconds.  Derivations of these quantities are provided in Appendix \ref{apx:toy_gen_bounds}.  The $\max$ appears for some parameters due to specifying the $h$ function as piece-wise continuous over different domains to remove the embedded $\text{neg}$ and $\text{pos}$ functions.

Substituting these values into Equations \eqref{eqn:l_theta_exp}-\eqref{eqn:l_theta_imp}, the loss Lipschitz constants are
\begin{align}
    L_{\text{vimp}, \theta} &= \frac{1}{\epsilon} \left( m B_{\text{nimp}} + \lambda_{\max} \left( 1 + \frac{m^2}{2 \epsilon} \right) \right), \label{eqn:l_theta_imp_2} \\
    L_{\text{nimp}, \theta} &= 2 B_{\text{nimp}} \frac{1}{\Delta t}, \label{eqn:l_theta_nimp_2} \\
    L_{\text{exp}, \theta} &= 2B_{\text{exp}} \frac{1}{\Delta t}. \label{eqn:l_theta_exp_2}
\end{align}

{\renewcommand{\arraystretch}{1.5}
\begin{table}[]
\small
\centering
\caption{Lipschitz Constants in Toy LCP Model}
\vspace{-0.3cm}
\label{table:toy_constants}
\begin{tabular}{ | c | c | c | }
    \hline
    \textbf{Parameter} & \textbf{Expression} & \textbf{Numerical Value} \\ \hline
    $L_{f, \theta}$ & $\frac{1}{\Delta t}$ & 200 \\ \hline
    $L_{g, \lambda}$ & $\frac{1}{m}$ & 1 \\ \hline
    $L_{g, \theta}$ & $0$ & 0 \\ \hline
    $L_{h, \lambda}$ & $\max \left\{ \phi_{\max}, \, \lambda_{\max} \right\}$ & 15.05 \\ \hline
    $L_{h, \theta}$ & $\max \left\{ \phi_{\max}, \, \lambda_{\max} \right\}$ & 15.05 \\ \hline
    $L_{\lambda, \theta}^{\text{nimp}}$ & $\max \left\{ \frac{m \Delta t}{m^2 + \Delta t^2}, \, \frac{m}{\Delta t} \right\}$ & 200 \\ \hline
    $L_{\lambda, \theta}^{\text{vimp}}$ & $\frac{m^2}{2 \epsilon}$ & 1 \\ \hline
\end{tabular}
\label{table:toy_constants}
\end{table}
}

This reveals the scaling of the explicit (and naive implicit, which behave identically in terms of generalization due to their equivalent parameterization) generalization bounds with $\frac{1}{\Delta t}$, an artifact of discretizing the impact event.  These Lipschitz constants can get arbitrarily large with small time steps; an unfortunate characteristic since small time steps are preferred for simulation accuracy.  In contrast, the violation implicit approach avoids this poor scaling.  Its generalization does depend on $\Delta t$ since the included $\lambda_{\max}$ parameter in its expression is usually described as a function of time.  This is typically calculated as $\lambda_{\max} = m(v_{\max} + a_{\text{grav}} \Delta t)$, which we see $\lambda_{\max} \approx m v_{\max}$ as $\Delta t \to 0$.  Thus, as $\Delta t$ gets arbitrarily small, the generalization error for the violation implicit approach does not grow unboundedly as it can for the explicit approach.  Given these are upper bounds, we can only guarantee the violation implicit approach has provably easy generalization regardless of $\Delta t$.  Such a guarantee cannot be made for the explicit and naive implicit approaches.

We note that increasing the hyperparameter $\epsilon$ to tighten the violation implicit generalization bounds comes at the expense of relaxing the physical feasibility of the solution.  This means at high $\epsilon$ values, the optimization could select unrealistic contact impulses corresponding to contact forces at a distance, penetration of the rigid mass with the ground, or contact forces that pull the objects together instead of pull apart.  We will discover however that this approximation error can be bounded when an upper bound is placed on $\epsilon$, derived in the next section.

\section{Graph Distance} \label{sec:graph_dist}

We present in this section the mathematical foundation to relate the violation implicit loss in \eqref{eqn:loss_imp} to a physically meaningful characteristic. Relating this loss to the concept of graph distance ensures that our aims of minimizing generalization error and training loss are in the pursuit of a quality model.

\subsection{Merits of Graph Distance and Prediction Losses}
Consider an arbitrary predictive model of the form $y = f^\theta(x)$, defined either explicitly or implicitly. The set of all input-output pairs predicted by this model is the \textit{graph} of the function $f^\theta(x)$:
\begin{align}
	\Graph &= \Brace{\begin{bmatrix}
		x \\ y
	\end{bmatrix} : y = f^\theta(x)}\,.\label{eq:graphdefinition}
\end{align}
If the noise component of a datapoint $(x_i, y_i)$ is most likely to be small (e.g., when $x_i$ and $y_i$ have Gaussian white noise), then a necessary condition for the datapoint to match the model $y = f^\theta(x)$ well is that there exists an input-output pair $(\bar x_i, \bar y_i) \approx (x_i, y_i)$ in the graph of $f^\theta(x)$. \textit{Graph distance} is therefore a natural metric to capture model accuracy:
\begin{equation}
	\GraphDistance(x_i, y_i) = \Dist{\begin{bmatrix}
		x_i \\ y_i
	\end{bmatrix}, \Graph} = \min_{x} \norm{\begin{bmatrix}
		x - x_i \\ f^\theta(x) - y_i
	\end{bmatrix}}.\label{eq:graphdistancedefinition}
\end{equation}
Particularly, when applied to predictive models where $x$ is a system state and $y$ is the next state at a following time step, it is often reasonable to believe there are similar noise distributions on the input and output, and thus weight the difference in $x$ and $y$ equally in \eqref{eq:graphdistancedefinition}. By contrast, \textit{prediction losses} like the explicit loss and naive implicit loss only consider noise in the output:
\begin{equation}
    l^\theta_{\text{exp}}(x_i, y_i) = \norm{y_i - f(x_i)}^2\,.\label{eq:predictionlossdefinition}
\end{equation}
While graph distance is a less ubiquitous performance metric than prediction loss, it has a rich history in statistical analysis through \textit{errors-in-variables} modelling \citep{cifarelli1988measurement}, dating as far back as Adcock's 1878 method for linear regression \citep{adcock1878leastsquares}.

\begin{figure}[]
    \centering
    \includegraphics[width=0.83 \linewidth, trim={0cm 0.2cm 0cm 1cm}, clip]{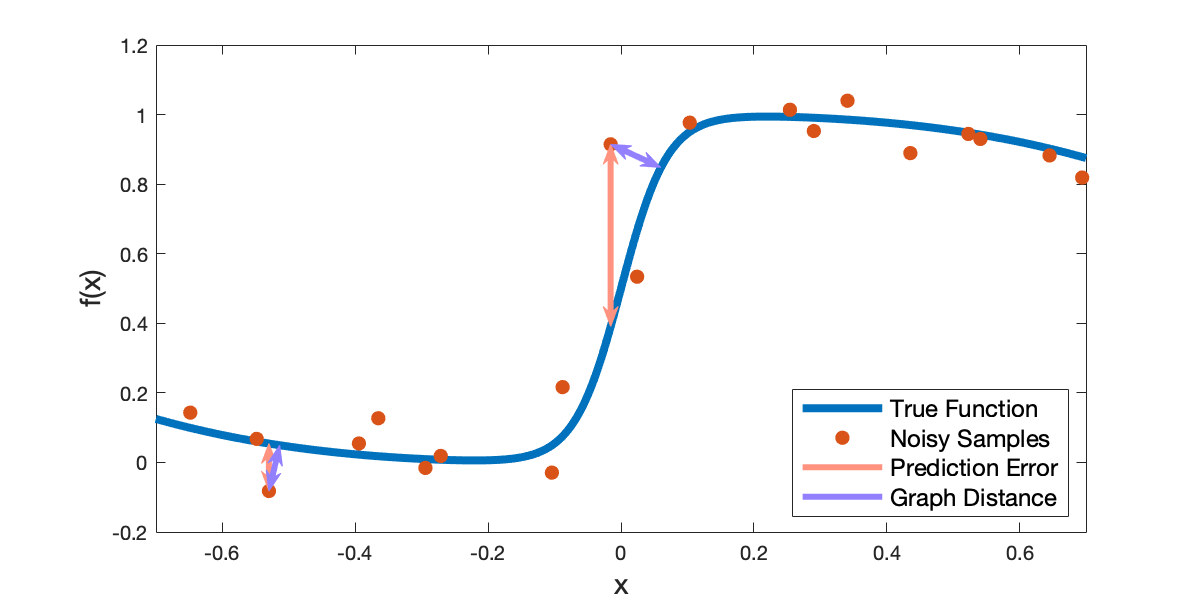}
    \caption{Illustration of graph distance versus prediction error on a function with a stiff region.  For the annotated data point at left where the function is smooth, graph distance is similar to prediction error.  In contrast, the annotated data point at the stiff region demonstrates a drastic disparity between these metrics.}
    \label{fig:graph_distance}
    \vspace{-0.3cm}
\end{figure}

One potential reason graph distance-type losses are not more widely used is the complexity introduced in the form of an optimization problem in the loss \eqref{eq:graphdistancedefinition}. For non-stiff systems, it is not clear that this complexity is warranted; in fact, for $f^\theta(x)$ with a small Lipschitz constant, prediction and graph distance losses are equal up to a small constant (See Appendix \ref{apx:lemma_2_proof} for a detailed proof):
\begin{lemma}
	If $f^\theta(x)$ is $L_f$-Lipschitz in $x$, $l^\theta_{\text{\normalfont exp}}(x_i, y_i) \geq \GraphDistance(x_i, y_i)^2 \geq \frac{l^\theta_{\text exp}(x_i, y_i)}{1 + L_f^2}$.  \label{lemma:NaiveLossVSGraphDistance}
\end{lemma}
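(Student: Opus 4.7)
The plan is to establish the two inequalities separately. The upper bound $\GraphDistance(x_i,y_i)^2 \leq l^\theta_{\text{exp}}(x_i,y_i)$ is immediate: the graph distance is defined as a minimum over $x$ of a nonnegative quantity, so evaluating at the particular choice $x = x_i$ gives $\GraphDistance(x_i, y_i)^2 \leq \|x_i - x_i\|^2 + \|f^\theta(x_i) - y_i\|^2 = l^\theta_{\text{exp}}(x_i,y_i)$.

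For the harder lower bound, the plan is to show that for every $x$, the quantity $\|x - x_i\|^2 + \|f^\theta(x) - y_i\|^2$ is at least $l^\theta_{\text{exp}}(x_i,y_i)/(1 + L_f^2)$, and then take the minimum. The first step is a triangle inequality: since $f^\theta$ is $L_f$-Lipschitz in $x$,
\begin{equation*}
    \|f^\theta(x_i) - y_i\| \;\leq\; \|f^\theta(x_i) - f^\theta(x)\| + \|f^\theta(x) - y_i\| \;\leq\; L_f \|x - x_i\| + \|f^\theta(x) - y_i\|.
\end{equation*}
The second step is to square this and convert the linear combination into a bound on the sum of squares via Cauchy--Schwarz (equivalently, $(au + bv)^2 \leq (a^2 + b^2)(u^2 + v^2)$ applied with $a = L_f$, $b = 1$, $u = \|x - x_i\|$, $v = \|f^\theta(x) - y_i\|$), which yields
\begin{equation*}
    \|f^\theta(x_i) - y_i\|^2 \;\leq\; (1 + L_f^2)\bigl(\|x - x_i\|^2 + \|f^\theta(x) - y_i\|^2\bigr).
\end{equation*}
Rearranging and taking the minimum over $x$ on the right-hand side (while the left-hand side is independent of $x$) gives the desired inequality $\GraphDistance(x_i,y_i)^2 \geq l^\theta_{\text{exp}}(x_i,y_i)/(1 + L_f^2)$.

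The main obstacle is the choice of the right algebraic manipulation in the lower bound, specifically recognizing that the Cauchy--Schwarz application with the weight vector $(L_f, 1)$ is exactly what produces the constant $1 + L_f^2$ in the denominator. Everything else is routine: the upper bound is a one-line substitution, and the triangle inequality is the natural way to introduce the Lipschitz constant into the estimate.
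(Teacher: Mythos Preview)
Your proof is correct. The upper bound is handled identically to the paper, but your lower bound takes a genuinely different and more elementary route. The paper proceeds geometrically: it relaxes the graph $\Graph$ to the cone
\[
\bar\Graph = \Brace{[x;\,y] : \norm{y - f^\theta(x_i)} \leq L_f\norm{x - x_i}},
\]
observes that $\Graph \subseteq \bar\Graph$ so that $\GraphDistance(x_i,y_i) \geq \Dist{[x_i;\,y_i],\bar\Graph}$, and then computes the latter distance exactly by writing down and solving the KKT conditions for the projection onto $\bar\Graph$. Your argument avoids all of this by bounding the objective pointwise: the triangle inequality plus Lipschitz gives $\norm{f^\theta(x_i) - y_i} \leq L_f\norm{x - x_i} + \norm{f^\theta(x) - y_i}$, and Cauchy--Schwarz with weights $(L_f,1)$ converts this to $l^\theta_{\text{exp}}(x_i,y_i) \leq (1+L_f^2)\bigl(\norm{x-x_i}^2 + \norm{f^\theta(x)-y_i}^2\bigr)$ for every $x$. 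The paper's approach has the virtue of identifying the exact projection point and showing the bound is tight on the relaxed set $\bar\Graph$; yours is shorter, requires no optimality conditions, and makes transparent exactly where the factor $1+L_f^2$ comes from.
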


However, we find that stiff, implicit models offer a unique combination of properties that make prediction loss a less suitable tool for analysis. First, as the implicit model already embeds an optimization problem in prediction loss, it no longer has a significant computational advantage over graph distance in many cases. Second, the stiffness of the model can induce a large discrepancy between prediction loss and graph distance, which scales with the Lipschitz constant $L_f$ of  $f^\theta(x)$ (Lemma \ref{lemma:NaiveLossVSGraphDistance}). This can result in large prediction loss even when the model closely matches the data (see Figure \ref{fig:graph_distance} for an illustrative example). Finally, we have seen in Section \ref{sec:gen_bounds} that when $L_f >> 1$, common analyses of prediction loss can fail to guarantee small generalization error, and thus cannot provide a guarantee of low test set graph distance. We now show that the better-behaved violation loss in fact can be used to tightly bound test set graph distance, even for such stiff cases.

\subsection{Violation Loss as a Proxy for Graph Distance}
Given that graph distance is a key metric of interest for a predictive model, and that the violation implicit loss can be shown to generalize well, we now establish conditions under which low graph distance can be guaranteed by low violation loss. For implicit models of the form in \eqref{eqn:implicit_y_from_g}--\eqref{eqn:h_min}, the graph $\Graph$ of the model is defined as
\begin{align}
	\Graph &= \Brace{\begin{bmatrix}
		x \\ y
	\end{bmatrix} : \exists 
	\lambda^* \in \Lambda,\, g^\theta(x,\lambda^*) = y \wedge h^\theta(x,y,\lambda^*) = 0}.
\end{align}
We will show that a relationship between the violation loss and graph distance can be established if $\ViolationLoss$ has a \textit{quadratic growth} behavior as defined in \citet{karimi2016linear} and reproduced below.
\begin{definition}
	A function $f(x)$ has $\mu$-Quadratic Growth ($\mu$-QG) if for all $x$,
    \begin{equation}
    	\Dist{x, \arg \min_{\bar x} f(\bar x)}^2 \leq \frac{2}{\mu} \left( f(x) - \min_{\bar x} f(\bar x) \right).
    \end{equation}
\end{definition}
For our purposes, we assume that
    $\ViolationLoss(x_i, y_i)$	is $\mu(\epsilon)$-QG for some $\mu(\epsilon) > 0$,
noting that this value depends on the $\epsilon$ in the violation loss. 
This condition holds for instance when $g^\theta$ is affine in $(x,\lambda)$ and $h$ is strictly convex, but is not limited to such strict assumptions; we will see in Section \ref{subsec:toyproblemgraphdistance} that it holds on the toy example, despite the fact that $h^\theta$ in this case is non-convex and non-smooth.

 To construct an inequality between violation loss and graph distance, we first prove (See Appendix \ref{apx:lemma_4_proof}) the that the minimization of $\ViolationLoss(x_i, y_i)$ is related to the model's graph $\Graph$:
\begin{lemma}
	$\underset{x, y}{\min} \, \, \ViolationLoss(x, y) = 0$, and $\underset{x, y}{\arg\min} \, \, \ViolationLoss(x, y) = \Graph$. \label{lemma:GraphMinimizesViolationLoss}
\end{lemma}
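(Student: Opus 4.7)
The plan is to prove both claims by a sandwich argument exploiting the non-negativity of the two summands and the assumed attainment of the inner minimization in \eqref{eqn:loss_imp}. First I would establish that $\ViolationLoss(x,y) \geq 0$ for all $(x,y)$: by the problem setup $h^\theta$ maps into $\mathbb{R}^+$ and $\epsilon > 0$, while the squared norm term is manifestly non-negative, so each $\lambda$-dependent summand is non-negative and hence so is their minimum. This gives the lower bound needed to pin down where the global minimum lives.

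Next I would show the inclusion $\Graph \subseteq \arg\min_{x,y} \ViolationLoss$ and, as a byproduct, that the minimum value is $0$. Pick $(x,y) \in \Graph$; by definition there is a $\lambda^* \in \Lambda$ with $g^\theta(x, \lambda^*) = y$ and $h^\theta(x,y,\lambda^*) = 0$. Plugging this $\lambda^*$ into the inner objective of \eqref{eqn:loss_imp} gives value $0$, so $\ViolationLoss(x,y) \leq 0$; combined with Step~1 this forces $\ViolationLoss(x,y) = 0$. That $\Graph$ is non-empty (and hence that $\min_{x,y}\ViolationLoss$ is actually attained at $0$) follows from the standing assumption $\min_\lambda h^\theta(x, g^\theta(x,\lambda), \lambda) = 0$ for every $x$: fix any $x$, let $\lambda^\dagger$ achieve this minimum, and set $y = g^\theta(x, \lambda^\dagger)$ to obtain a witness on the graph.

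Finally, I would prove the reverse inclusion $\arg\min_{x,y} \ViolationLoss \subseteq \Graph$. Suppose $\ViolationLoss(x,y) = 0$. The inner minimization in \eqref{eqn:loss_imp} is over $\lambda \in \Lambda$ with $\norm{\lambda} \leq B_\lambda$; under the paper's boundedness and differentiability standing assumptions on $g^\theta$ and $h^\theta$ this minimum is attained at some $\lambda^* \in \Lambda$. At $\lambda^*$ the sum of two non-negative numbers equals zero, so each summand vanishes individually: $\norm{y - g^\theta(x,\lambda^*)}^2 = 0$ and $h^\theta(x,y,\lambda^*) = 0$. This $\lambda^*$ is exactly the witness required by the definition of $\Graph$, so $(x,y) \in \Graph$, completing the set equality.

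The main obstacle, and the only step that requires care beyond definition-chasing, is the attainment of the inner minimization in the third paragraph: one must appeal to continuity of $g^\theta, h^\theta$ and compactness of the constraint set $\{\lambda \in \Lambda : \norm{\lambda} \leq B_\lambda\}$ to invoke Weierstrass and replace the $\min$ notation in \eqref{eqn:loss_imp} by a genuine minimizer. Everything else is a short chain of non-negativity observations and substitutions, so I would devote most of the exposition to making this attainment step explicit before collecting the three inclusions into the stated equalities.
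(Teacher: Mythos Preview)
Your proposal is correct and follows essentially the same route as the paper: both establish $\ViolationLoss \geq 0$ from non-negativity of the summands, then show that $\ViolationLoss(x,y)=0$ if and only if a witness $\lambda^*\in\Lambda$ exists with $g^\theta(x,\lambda^*)=y$ and $h^\theta(x,y,\lambda^*)=0$, which is precisely the defining condition for $(x,y)\in\Graph$. Your treatment is in fact more careful than the paper's, which silently assumes attainment of the inner minimum and non-emptiness of $\Graph$; your explicit appeal to Weierstrass and to the standing assumption $\min_\lambda h^\theta(x,g^\theta(x,\lambda),\lambda)=0$ fills in exactly those gaps.
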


Lemma \ref{lemma:GraphMinimizesViolationLoss} follows directly from the definition of the model's graph $\Graph$ and does not rely on any model assumptions beyond those in Section \ref{sec:gen_bounds}. However, under the assumption that $\ViolationLoss$ has quadratic growth, this result can be extended to a comparison with graph distance:
\begin{theorem} \label{thm:graph_dist_bound}
    Assume $\ViolationLoss$ is $\mu(\epsilon)$-QG.
	Then for any datapoint $(x_i, y_i)$, $\GraphDistance(x_i, y_i)^2 \leq \frac{2}{\mu(\epsilon)} \ViolationLoss (x_i, y_i)$.
\end{theorem}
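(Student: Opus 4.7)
The plan is to observe that Theorem \ref{thm:graph_dist_bound} follows almost immediately by chaining the quadratic growth hypothesis on $\ViolationLoss$ with the characterization of its minimizer set supplied by Lemma \ref{lemma:GraphMinimizesViolationLoss}. In other words, both the inequality (QG) and the identification of the relevant $\arg\min$ set ($\Graph$) are already in hand; the work is just to line them up and then recognize distance-to-$\Graph$ as graph distance.

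Concretely, I would first instantiate the $\mu(\epsilon)$-QG definition with $f = \ViolationLoss$ at the point $(x_i, y_i)$, which yields
\begin{equation*}
    \Dist{\begin{bmatrix} x_i \\ y_i \end{bmatrix},\, \arg\min_{(\bar x, \bar y)} \ViolationLoss(\bar x, \bar y)}^2 \;\leq\; \frac{2}{\mu(\epsilon)} \Paren{ \ViolationLoss(x_i, y_i) - \min_{(\bar x, \bar y)} \ViolationLoss(\bar x, \bar y) }.
\end{equation*}
Next, I would invoke Lemma \ref{lemma:GraphMinimizesViolationLoss} to replace the minimum value on the right with $0$ and to replace the $\arg\min$ set on the left with $\Graph$. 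This already gives
\begin{equation*}
    \Dist{\begin{bmatrix} x_i \\ y_i \end{bmatrix},\, \Graph}^2 \;\leq\; \frac{2}{\mu(\epsilon)} \, \ViolationLoss(x_i, y_i).
\end{equation*}
Finally, by the definition of graph distance in \eqref{eq:graphdistancedefinition} applied to the implicit graph, the left-hand side is exactly $\GraphDistance(x_i, y_i)^2$, which completes the argument.

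The only substantive thing to be careful about, and the closest thing to an obstacle, is making sure the definition of graph distance for the implicit model in this section matches the point-to-set distance $\Dist{\cdot,\Graph}$ that appears in the QG inequality; since the paper's $\Graph$ for implicit models is defined as a set of input--output pairs and $\GraphDistance$ is defined as $\Dist{[x_i;y_i],\Graph}$ directly, the identification is immediate and no approximation or Lipschitz constants of $g^\theta$ or $h^\theta$ enter. Everything else is a one-line substitution, so the proof should be at most a few lines long.
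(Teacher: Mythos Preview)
Your proposal is correct and is essentially identical to the paper's own proof: the paper simply invokes Lemma \ref{lemma:GraphMinimizesViolationLoss} to substitute $\arg\min \ViolationLoss = \Graph$ and $\min \ViolationLoss = 0$ into the definition of $\mu(\epsilon)$-quadratic growth, which is exactly the substitution you describe.
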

\begin{proof}
    This claim follows directly from Lemma \ref{lemma:GraphMinimizesViolationLoss}, as we can substitute $\arg\min_{x, y} \, \, \ViolationLoss(x, y) = \Graph$ and $\min_{x, y} \, \, \ViolationLoss(x, y) = 0$ directly into the definition of quadratic growth.
\end{proof}
Many of the properties that would allow for the violation loss optimization problem \eqref{eqn:loss_imp} to be solved, such as strong convexity, quadratic error bound, Polyak-\L ojasiewicz, or Kurdyka-\L ojasiewicz, are in fact equally or more strict that quadratic growth \citep{karimi2016linear}. Thus while an additional assumption is required, Theorem \ref{thm:graph_dist_bound} often holds when the violation loss is computationally tractable.

The inequality provided in Theorem \ref{thm:graph_dist_bound} is particularly useful because it separates squared graph distance and violation loss by a constant factor \textit{uniformly} over any possible data point. This inequality therefore is preserved under expectation, such that
$$\mathbb{E}_{[x;\;y] \sim \mathcal D}\left[\GraphDistance(x, y)^2  \right] \leq \frac{2}{\mu(\epsilon)}\mathbb{E}_{[x;\;y] \sim \mathcal D}\left[\ViolationLoss (x, y) \right]\,.$$
Given that Section \ref{sec:gen_bounds} guarantees $\ViolationLoss$ generalizes well for large $\epsilon$, guaranteeing low test-set graph distance therefore reduces to showing that $l_{\text{vimp}}$ is $\mu(\epsilon)$-QG with both $\epsilon$ and $\mu(\epsilon)$ sufficiently large.

\subsection{Graph Distance Property for Toy Problem}\label{subsec:toyproblemgraphdistance}

\begin{figure}[]
    \centering
    \includegraphics[width= 0.75\linewidth, trim={2cm 0.1cm 4cm 1cm}, clip]{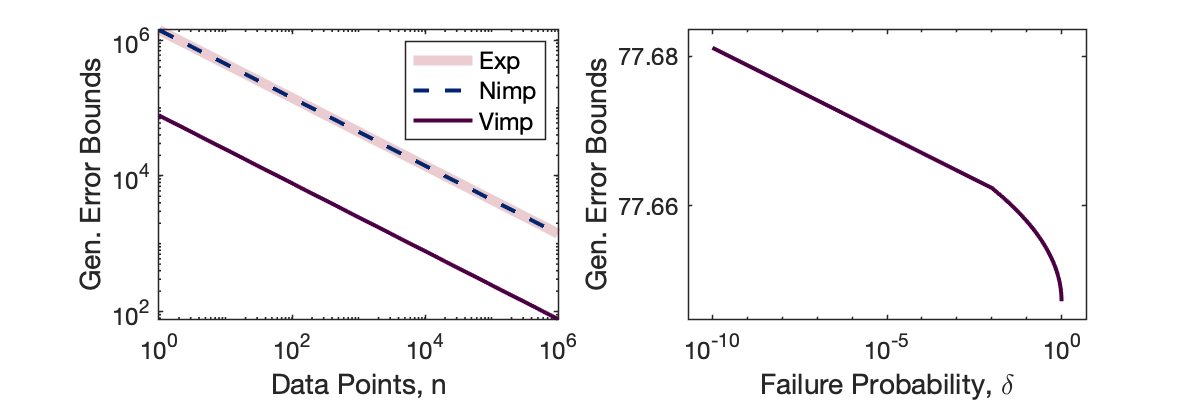}
    \caption{Generalization error bounds as a function of dataset size for the 3 approaches (left) and as a function of failure probability for the violation implicit approach (right) applied to the 1-D toy problem.  The $\epsilon$ value is chosen via analysis relating the violation loss to graph distance:  $\epsilon = \min(\frac{1}{4}, \frac{m^2}{2})$ which is $\frac{1}{4}$ for the values in the toy problem.  The explicit and naive implicit approaches have identical bounds which require over two orders of magnitude more data to achieve the violation implicit approach's generalization error guarantees.}
    \label{fig:gen_bounds}
    \vspace{-0.3cm}
\end{figure}

With derivations in Appendix \ref{apx:graph_dist_toy}, $\ViolationLoss$ is 1-QG if we select
$\epsilon \leq \min \left( \frac{1}{4}, \frac{m^2}{2} \right)$.  While this choice is not required, the feasible bounds for $\mu$ are $(0, 2)$, with 0 corresponding to infinite $\epsilon$ and 2 corresponding to $\epsilon = 0$.  Our work provides the understanding that $\epsilon \to \infty$ means the violation implicit approach ignores optimality constraints and thus loses its relationship to graph distance, while $\epsilon = 0$ converts the formulation to the naive implicit approach, which loses tight generalization guarantees.  Selecting $\mu = 1$ balances these two important characteristics, and thus can inform a selection of $\epsilon$.

With $\epsilon = \min \left( \frac{1}{4}, \frac{m^2}{2} \right)$ and $m = 1$ kilogram for the specifics of the toy problem, we select $\epsilon = \frac{1}{4}$ to balance the violation implicit approach's ability to generalize and its physical meaning as a loss.  The loss landscapes and bounds depicted in Figures \ref{fig:toy} and \ref{fig:gen_bounds}, respectively, feature this value of $\epsilon$; Figure \ref{fig:gen_bounds} in particular still shows that the generalization of the violation implicit approach can be drastically better than the other approaches at this choice of $\epsilon$.

\section{Conclusion} \label{sec:conclusion}
We introduced and motivated three different approaches for learning a model:  explicit, naive implicit, and violation implicit.  We demonstrated benefits of the violation implicit approach in terms of its ability to generalize more reliably and its close relationship to graph distance, a metric that better suits functions with uncertainty in both outputs and inputs.  Together, these two benefits motivate a theoretically-grounded value for the violation implicit approach's hyperparameter, $\epsilon$, balancing better generalization with tighter relationship to graph distance.  Our inelastic contact toy problem demonstrates this choice of $\epsilon$ results in significant generalization error bound improvements over either alternative approach as well as twice its loss upper bounds graph distance squared.

This paper focused on generalization error bounds, but Figure \ref{fig:toy} also illustrates that the optimization landscape of a violation implicit loss itself is improved from that of a prediction loss.  We have observed this benefit empirically from our work on contact learning \citep{parmar2021fundamental, pfrommer2020}.  Given typical properties associated with ease of learnability like convexity \citep{boyd2004convex} or smoothness \citep{karimi2016linear} are not commonly met by formulations of interest (including our toy problem), we seek a more general explanation in our future work.

\acks{This work was supported by the National Defense Science and Engineering Graduate Fellowship and an NSF Graduate Research Fellowship under Grant No. DGE-1845298.  Toyota Research Institute also provided funds to support this work.  Nikolai Matni is funded by NSF awards CPS-2038873, CAREER award ECCS-2045834, and a Google Research Scholar award.}

\bibliography{refs.bib}

\appendix

\section{Generalization Error Bound Derivations} \label{apx:gen_bounds}
Here we derive the generalization bounds for the loss formulations $l_{\text{vimp}}$, $l_{\text{nimp}}$, and $l_{\text{exp}}$ using Theorem 
\ref{thm:generalization}.  First we present the derivation of the generalization error bounds for the violation implicit approach.  The naive implicit and explicit approaches follow the same steps but with more simplifications, and thus their derivations are not provided herein.

\subsection{Violation Implicit Loss Lipschitz Constant w.r.t. $\theta$}
Recall the violation implicit loss formulation in Equation \eqref{eqn:loss_imp}.  This loss depends on functions $g$ and $h$.  We can write $\mathcal{G} = \{ g(x, \lambda; \theta) = g(x, \lambda(x, y; \theta); \theta) = g(z; \theta) \}$ to drop the dependence on a specified $\lambda$ since the $\lambda$ that minimizes \eqref{eqn:loss_imp} is itself a function of $z$ and $\theta$.  We can write $\mathcal{H} = \{ h(z; \theta) \}$ via the same notational step.

Next, we apply known Lipschitz constants to bound differences in hypotheses of $g$ and $h$.  Starting with $g$, and denoting $\lambda_j$ as the minimizer of \eqref{eqn:loss_imp} at $z$ and $\theta_j$, we bound
\begin{align}
    \norm{g(z; \theta_1) - g(z; \theta_2)} &\leq L_{g, \lambda} \norm{\lambda_1 - \lambda_2} + L_{g, \theta} \norm{\theta_1 - \theta_2}, \\
    &\leq \left( L_{g, \lambda} L_{\lambda, \theta}^{(\text{vimp})} + L_{g, \theta} \right) \norm{\theta_1 - \theta_2}.
\end{align}
We are left with a Lipschitz constant of $g$ with respect to $\theta$ as $\text{Lip}_\theta \, g(z, ., .) = L_{g, \lambda} L_{\lambda, \theta}^{(\text{vimp})} + L_{g, \theta}$.  Repeating for $h$, we obtain $\text{Lip}_\theta \, h(z, ., .) = L_{h, \lambda} L_{\lambda, \theta}^{(\text{vimp})} + L_{h, \theta}$.

We can use these values to bound the differences in loss values of different parameterizations of $g$ and $h$.  We make use of the triangle inequality and the bounds/Lipschitz constants defined or derived previously, beginning with
\begin{align}
    &| l_{\text{vimp}}(z; g_{\theta_1}, h_{\theta_1}) - l_{\text{vimp}}(z; g_{\theta_2}, h_{\theta_2}) | \tag*{} \\
    &= \left| \norm{y - g(z; \theta_1)}^2 + \frac{1}{\epsilon} h(z; \theta_1) - \norm{y - g(z; \theta_2)}^2 - \frac{1}{\epsilon} h(z; \theta_2) \right|, \\
    &\leq \norm{2y - g(z; \theta_1) - g(z; \theta_2)} \cdot \norm{g(z; \theta_1) - g(z; \theta_2)} + \frac{1}{\epsilon} \norm{ h(z; \theta_1) - h(z; \theta_2)}, \\
    &\leq \left( 2B_{\text{nimp}} \left( L_{g, \lambda} L_{\lambda, \theta}^{(\text{vimp})} + L_{g, \theta} \right) + \frac{1}{\epsilon} \left( L_{h, \lambda} L_{\lambda, \theta}^{(\text{vimp})} + L_{h, \theta} \right) \right) \cdot \norm{\theta_1 - \theta_2},
\end{align}
yielding a Lipschitz constant of $l_{\text{vimp}}$ with respect to $\theta$ of $L_{\text{vimp}, \theta} = 2B_{\text{nimp}} \left( L_{g, \lambda} L_{\lambda, \theta}^{(\text{vimp})} + L_{g, \theta} \right) + \frac{1}{\epsilon} \left( L_{h, \lambda} L_{\lambda, \theta}^{(\text{vimp})} + L_{h, \theta} \right)$, matching the expression in \eqref{eqn:l_theta_imp}.

\subsection{Relating Loss Lipschitz Constant w.r.t. $\theta$ to Generalization Error Bounds} \label{sec:prove_dudley}
In this section, we use $f \in \mathcal{F}$ to be a function in a generic parametric function class.  Application of these results in specific form is left in Section \ref{sec:gen_bounds}.

Theorem \ref{thm:generalization} expresses generalization error in terms of the Lipschitz constant of the loss w.r.t. the parameters $\theta$.  This section works out the intermediate steps, starting with the definition of Rademacher complexity \citep{shalev2014understanding}.
\begin{definition}[Rademacher Complexity]
    Define $\boldsymbol\sigma$ as a sample of $n$ Rademacher variables $\{ \sigma_i \}$ such that $\mathbb{P} [\sigma_i = +1] = \mathbb{P} [\sigma_i = -1] = 1/2$.  The Rademacher complexity of a function class $\mathcal{F}$ with respect to a data set $\mathcal{S}$ is defined as
    \begin{equation}
        \mathcal{R}_\mathcal{S}(\mathcal{F}) := \frac{1}{n} \, \, \underset{\boldsymbol\sigma \sim \{ \pm 1 \}^n}{\mathbb{E}} \left[ \underset{f \in \mathcal{F}}{\sup} \, \sum_{i=1}^n \sigma_i f(z_i) \right].
    \end{equation}
\end{definition}
The following theorem from \cite{bartlett2002rademacher} relates Rademacher complexities to generalization error.
\begin{theorem} \label{thm:generalization_rad}
    Fix a failure probability $\delta \in (0, 1)$, and assume that a class of loss functions $\mathcal{L} = \{ l : z \to [0, B] \}$ is acting on a hypothesis function class $\mathcal{F}$ with data $\mathcal{S} = \{ z_1, \dots z_n \} \sim \mathcal{D}^n$.  Then with probability at least $1-\delta$ and $\forall l \in \mathcal{L}$ and $\forall f \in \mathcal{F}$,
    \begin{equation}
        \Delta_{gen}^\mathcal{S} \leq 2 \mathcal{R}_\mathcal{S} (\mathcal{L} \circ \mathcal{F}) + B \sqrt{\frac{\log(1/\delta)}{2n}}. \label{eqn:generalization}
    \end{equation}
\end{theorem}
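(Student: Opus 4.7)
The plan is to follow the classical symmetrization-plus-concentration argument for one-sided uniform convergence bounds via Rademacher complexity. Define the worst-case generalization gap as a function of the dataset:
\begin{equation*}
    \Phi(\mathcal{S}) := \sup_{l \in \mathcal{L},\, f \in \mathcal{F}} \left( \mathbb{E}_{z \sim \mathcal{D}}[l(f(z))] - \frac{1}{n}\sum_{i=1}^n l(f(z_i)) \right).
\end{equation*}
The theorem amounts to showing $\Phi(\mathcal{S}) \leq 2\mathcal{R}_\mathcal{S}(\mathcal{L} \circ \mathcal{F}) + B\sqrt{\log(1/\delta)/(2n)}$ with high probability, since this upper-bounds $\Delta^\mathcal{S}_{gen}$ uniformly over $l \in \mathcal{L}$ and $f \in \mathcal{F}$.

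First, I would establish the bounded-differences property for $\Phi$. Replacing any single sample $z_i$ by $z_i'$ inside $\mathcal{S}$ changes each term $\frac{1}{n} l(f(z_i))$ by at most $B/n$ because $l \in [0, B]$, and the $\sup$ of functions satisfying a bounded-differences condition inherits the same condition. Applying McDiarmid's inequality with constants $c_i = B/n$ then yields, with probability at least $1-\delta$,
\begin{equation*}
    \Phi(\mathcal{S}) \leq \mathbb{E}_{\mathcal{S}}[\Phi(\mathcal{S})] + B\sqrt{\frac{\log(1/\delta)}{2n}}.
\end{equation*}

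Second, I would bound $\mathbb{E}_\mathcal{S}[\Phi(\mathcal{S})]$ by the standard symmetrization trick. Introduce a ghost sample $\mathcal{S}' = \{z_1', \dots, z_n'\}$ drawn i.i.d. from $\mathcal{D}$, independent of $\mathcal{S}$. Then $\mathbb{E}_{z \sim \mathcal{D}}[l(f(z))] = \mathbb{E}_{\mathcal{S}'}\left[\frac{1}{n}\sum_i l(f(z_i'))\right]$, and pulling the (conditional) expectation outside the $\sup$ via Jensen's inequality gives
\begin{equation*}
    \mathbb{E}_\mathcal{S}[\Phi(\mathcal{S})] \leq \mathbb{E}_{\mathcal{S}, \mathcal{S}'}\left[\sup_{l, f} \frac{1}{n}\sum_{i=1}^n \bigl(l(f(z_i')) - l(f(z_i))\bigr)\right].
\end{equation*}
Because $z_i$ and $z_i'$ are i.i.d., their roles can be swapped without changing the joint distribution. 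Inserting independent Rademacher variables $\sigma_i \in \{\pm 1\}$ to realize this swap and then splitting the $\sup$ of a sum of two terms using the triangle inequality yields
\begin{equation*}
    \mathbb{E}_\mathcal{S}[\Phi(\mathcal{S})] \leq 2 \, \mathbb{E}_{\mathcal{S}, \boldsymbol\sigma}\left[\sup_{l, f} \frac{1}{n}\sum_{i=1}^n \sigma_i \, l(f(z_i))\right] = 2\,\mathbb{E}_\mathcal{S}[\mathcal{R}_\mathcal{S}(\mathcal{L}\circ\mathcal{F})].
\end{equation*}
Combining with the McDiarmid step and a further application of bounded differences to replace $\mathbb{E}_\mathcal{S}[\mathcal{R}_\mathcal{S}]$ with $\mathcal{R}_\mathcal{S}$ (absorbed into the $\delta$ term, which is what \citet{bartlett2002rademacher} give as an empirical Rademacher bound) completes the argument.

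The main obstacle is bookkeeping rather than any deep technical hurdle: carefully justifying the interchange of $\sup$ with expectation (Jensen's inequality applied to the convex function $\sup$), explaining why inserting the Rademacher variables is distributionally free due to exchangeability of $(z_i, z_i')$, and correctly propagating the ``empirical versus expected'' version of $\mathcal{R}_\mathcal{S}$. The statement as phrased uses the data-dependent $\mathcal{R}_\mathcal{S}$, so one must take slight care that the high-probability bound holds with this empirical quantity rather than its expectation, which is a minor additional concentration step (again via McDiarmid, since $\mathcal{R}_\mathcal{S}$ itself satisfies bounded differences with constants $B/n$).
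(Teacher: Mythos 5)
Your argument is the classical symmetrization-plus-McDiarmid proof, and it is sound in its essentials; note, however, that the paper does not prove this statement at all --- it is imported verbatim from \cite{bartlett2002rademacher}, and the paper's own proof effort is reserved for Theorem \ref{thm:generalization}, which combines this bound with Dudley's entropy integral and covering-number estimates. So you have supplied a proof where the paper gives only a citation, and your route is the standard one underlying the cited result: bounded differences of the worst-case gap $\Phi(\mathcal{S})$ with constants $B/n$, McDiarmid's inequality, then ghost-sample symmetrization with Rademacher variables (justified by exchangeability of $(z_i, z_i')$ and Jensen's inequality) to reach $2\,\mathbb{E}_\mathcal{S}[\mathcal{R}_\mathcal{S}(\mathcal{L}\circ\mathcal{F})]$.

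The one place where your write-up does not quite deliver the statement as printed is the final step you describe as ``absorbed into the $\delta$ term.'' Symmetrization naturally yields the \emph{expected} Rademacher complexity, while the theorem as stated uses the data-dependent $\mathcal{R}_\mathcal{S}$. Passing from $\mathbb{E}_\mathcal{S}[\mathcal{R}_\mathcal{S}]$ to $\mathcal{R}_\mathcal{S}$ requires a second McDiarmid application plus a union bound, and doing this honestly changes the deviation term to roughly $3B\sqrt{\log(2/\delta)/(2n)}$ rather than $B\sqrt{\log(1/\delta)/(2n)}$: the factor enters because the Rademacher term carries a coefficient of $2$ and the confidence must be split between the two concentration events. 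So either state the bound with $\mathbb{E}_\mathcal{S}[\mathcal{R}_\mathcal{S}]$ and keep the clean constant, or keep the empirical $\mathcal{R}_\mathcal{S}$ and accept the larger constant; the version printed here (empirical complexity with the clean constant) is a slightly loose transcription of the cited result, and your sketch flags but does not resolve that looseness. This has no downstream consequence, since Theorem \ref{thm:generalization} only needs the $O(1/\sqrt{n})$ scaling and constant-factor precision is not claimed.
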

We additionally will use Dudley's entropy integral \citep{dudley2014uniform}.
\begin{theorem}[Dudley's Entropy Integral] \label{thm:dudley}
    Let $\{ x_\theta, \theta \in \mathcal{T} \}$ denote a zero-mean, bounded stochastic process with $\rho_x$.  Let $D := \sup_{\theta, \tilde{\theta} \in \mathcal{T}} \rho_x (\theta, \tilde{\theta})$ be the \enquote{diameter} of the stochastic process.  Define
    \begin{equation}
        J_D := \int_0^D \sqrt{\log \mathcal{N} (u; \mathcal{T}, \rho_x)} \, du,
    \end{equation}
    where $\mathcal{N} (u; \mathcal{T}, \rho_x)$ is the $u$-covering of $\mathcal{T}$ with respect to $\rho_x$.  Then,
    \begin{align}
        \mathbb{E} \left[ \sup_{\theta, \tilde{\theta} \in \mathcal{T}} (x_\theta - x_{\tilde{\theta}}) \right] &\leq 16 J_D, \label{eqn:dud_1}
    \end{align}
    and thus,
    \begin{align}
        \mathbb{E} \left[ \sup_{\theta \in \mathcal{T}} \theta \right] &= \mathbb{E} \left[ \sup_{\theta \in \mathcal{T}} \theta - \theta_0 \right] \leq \mathbb{E} \left[ \sup_{\theta, \tilde{\theta} \in \mathcal{T}} (x_\theta - x_{\tilde{\theta}}) \right] \leq 16 J_D. \label{eqn:dud_2}
    \end{align}
\end{theorem}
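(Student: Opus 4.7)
The plan is to use a classical chaining argument. The target is to bound $\mathbb{E}\bigl[\sup_{\theta, \tilde\theta \in \mathcal{T}} (x_\theta - x_{\tilde\theta})\bigr]$, and the strategy is to approximate each $x_\theta$ by values on a sequence of progressively finer nets in $\mathcal{T}$ under the pseudo-metric $\rho_x$, then telescope these approximations and control each level-wise increment via the maximal inequality for centered subgaussian (or bounded) random variables.

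First, I would construct a geometric sequence of nets $\mathcal{T}_0 \subseteq \mathcal{T}_1 \subseteq \cdots \subseteq \mathcal{T}$, where $\mathcal{T}_k$ is a minimal $D\cdot 2^{-k}$-cover of $\mathcal{T}$ under $\rho_x$, so that $|\mathcal{T}_k| \leq \mathcal{N}(D\cdot 2^{-k}; \mathcal{T}, \rho_x)$. By the definition of the diameter $D$, the level-$0$ net may be taken as a singleton $\{\theta_0\}$. Each $\theta \in \mathcal{T}$ has a nearest representative $\pi_k(\theta) \in \mathcal{T}_k$, yielding (under mild sample-continuity, handled by first working over finite subsets and then taking a monotone limit) the telescoping identity
\[
x_\theta - x_{\theta_0} = \sum_{k=1}^\infty \bigl(x_{\pi_k(\theta)} - x_{\pi_{k-1}(\theta)}\bigr).
\]

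Next, I would estimate the $k$-th increment level by level. By the triangle inequality, $\rho_x(\pi_k(\theta), \pi_{k-1}(\theta)) \leq 3D\cdot 2^{-k}$, and the number of distinct pairs $(\pi_k(\theta), \pi_{k-1}(\theta))$ as $\theta$ ranges over $\mathcal{T}$ is at most $|\mathcal{T}_k|\cdot|\mathcal{T}_{k-1}| \leq |\mathcal{T}_k|^2$. Applying the subgaussian maximal inequality $\mathbb{E}[\max_i Y_i] \leq \sigma \sqrt{2 \log N}$ level by level gives
\[
\mathbb{E}\Bigl[\sup_{\theta \in \mathcal{T}} \bigl(x_{\pi_k(\theta)} - x_{\pi_{k-1}(\theta)}\bigr)\Bigr] \leq C\, D\cdot 2^{-k}\sqrt{\log \mathcal{N}(D\cdot 2^{-k}; \mathcal{T}, \rho_x)}
\]
for a universal constant $C$. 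Summing over $k$, exchanging sum and supremum, and comparing the resulting dyadic series to $J_D = \int_0^D \sqrt{\log \mathcal{N}(u;\mathcal{T}, \rho_x)}\,du$ by monotonicity of the covering number in $u$ produces the bound $\mathbb{E}[\sup_{\theta} (x_\theta - x_{\theta_0})] \leq 16 J_D$. The symmetric statement \eqref{eqn:dud_1} then follows by pairing each $\theta$ with an arbitrary $\tilde\theta$ via a second application of the same chaining argument (or, more cheaply, by noting that the sup in \eqref{eqn:dud_1} is dominated by twice the sup in \eqref{eqn:dud_2}), and \eqref{eqn:dud_2} is the inequality already written in the statement using the zero-mean assumption.

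The main obstacle I anticipate is tracking the numerical constant to arrive exactly at $16$. Chaining proofs are notoriously sensitive to the choice of net radii (dyadic versus other geometric scales), to the exact maximal-inequality constant that is used, and to a factor-of-two artifact arising from the triangle inequality between consecutive nets; reaching the stated constant typically requires the standard dyadic construction together with the sharpest available subgaussian maximal bound, and care in converting the dyadic sum to the integral $J_D$. A secondary technical nuisance is justifying sample continuity so that the almost-sure telescoping is well-defined; the cleanest route is to prove the bound first for an arbitrary finite subset of $\mathcal{T}$, which avoids measurability issues, and then upgrade to all of $\mathcal{T}$ by taking a supremum of finite subsets.
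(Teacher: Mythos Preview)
The paper does not actually prove this statement: Theorem~\ref{thm:dudley} is simply quoted from \citet{dudley2014uniform} as a known tool and then applied in the proof of Theorem~\ref{thm:generalization}. There is therefore no ``paper's own proof'' to compare against.

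That said, your chaining sketch is the standard argument for Dudley's entropy integral and is correct in outline: geometric nets at scales $D\cdot 2^{-k}$, telescoping $x_\theta - x_{\theta_0}$ along successive projections $\pi_k(\theta)$, bounding each level with the subgaussian maximal inequality, and comparing the dyadic sum to the integral $J_D$ by monotonicity of covering numbers. Your caveats are also the right ones. Reaching the precise constant $16$ depends on bookkeeping choices (triangle-inequality slack between adjacent nets, the constant in the maximal inequality, and the dyadic-sum-to-integral comparison), and the paper itself does not track where the $16$ comes from, so you cannot calibrate against it. The finite-subset-then-limit maneuver you mention is indeed the clean way to sidestep measurability and separability issues. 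One minor remark: in deriving \eqref{eqn:dud_1} from the one-sided bound, your parenthetical ``dominated by twice the sup'' is the cheap route, but that would produce $32J_D$ rather than $16J_D$; the paper's statement simply chains \eqref{eqn:dud_2} after \eqref{eqn:dud_1}, so the direction of implication is the reverse of what you wrote, and only the one-sided bound with constant $16$ is actually needed downstream.
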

And lastly, we require some standard properties of covering numbers:
\begin{enumerate}
    \item For $B_2^n(1)$ as the Euclidean ball in $n$ dimensions,
    \begin{align}
        \mathcal{N} (\varepsilon; B_2^n(1), \norm{.}_2) &\leq n \log \left( 1 + \frac{2}{\varepsilon} \right), \\
        \Rightarrow \mathcal{N} (\varepsilon; B_2^n(1), \norm{.}_2) &\lesssim \left( \frac{1}{\varepsilon} \right)^n. \label{eqn:cov_num_1}
    \end{align}
    
    \item Let $f_\theta \in \mathcal{F}$, $\theta \in \Theta$, $\norm{.}_\mathcal{F}$ define a metric on $\mathcal{F}$, and $\norm{.}_\Theta$ define a metric on $\Theta$.  Suppose $\theta \to f_\theta$ is $L_\theta$-Lipschitz, i.e. $\norm{f_{\theta_1} - f_{\theta_2}}_\mathcal{F} \leq L_\theta \norm{\theta_1 - \theta_2}_\Theta$.  Then,
    \begin{align}
        \mathcal{N} (\varepsilon; \mathcal{F}, \norm{.}_\mathcal{F}) &\leq \mathcal{N} \left( \frac{\varepsilon}{L_\theta}; \Theta, \norm{.}_\Theta \right). \label{eqn:cov_num_2}
    \end{align}
    
    \item If $\Theta, B_2^k(1) \subset \mathbb{R}^k$ and $\norm{\theta}_2 \leq B_\theta, \, \forall \theta \in \Theta$, then
    \begin{align}
        \mathcal{N} (\varepsilon; \Theta, \norm{.}_2) &\leq \mathcal{N} \left( \frac{\varepsilon}{B_\theta}; B_2^k(1), \norm{.}_2 \right). \label{eqn:cov_num_3}
    \end{align}
\end{enumerate}

With these definitions, we can begin our proof of Theorem \ref{thm:generalization}.

\begin{proof}[Theorem \ref{thm:generalization}]
    We apply Theorem \ref{thm:dudley} to Rademacher complexities by defining the zero-mean random variable $z_f = \frac{1}{\sqrt{n}} \sum_{i=1}^n \sigma_i f(x_i)$ and letting $\{ z_f, f \in \mathcal{F} \}$ be our stochastic process.  We additionally assume that $| f(x) | \leq b, \, \forall x \in \mathcal{X}$.  We define
    \begin{align}
        \rho_x^2 (f, g) := \norm{f - g}^2_n := \frac{1}{n} \sum_{i=1}^n \norm{f(x_i) - g(x_i)}^2 \leq 4b^2.
    \end{align}
    We apply \eqref{eqn:dud_1} and \eqref{eqn:dud_2} from Dudley's entropy integral to obtain
    \begin{align}
        \mathcal{R}_{\mathcal{S}}(\mathcal{F}) &= \mathbb{E}_{\{ \sigma_i \}} \left[ \sup_{f \in \mathcal{F}} \frac{1}{n} \sigma_i f(x_i) \right], \\
        &= \frac{1}{\sqrt{n}} \mathbb{E}_{\{ \sigma_i \}} \left[ \sup_{f \in \mathcal{F}} z_f \right], \\
        &\leq \frac{16}{\sqrt{n}} \int_0^{2b} \sqrt{\log \mathcal{N} (u; \mathcal{F}, \norm{.}_n)} \, du. \label{eqn:dud_partway}
    \end{align}
    Thus, bounding the Rademacher complexity of a function class $\mathcal{R}_{\mathcal{S}}(\mathcal{F})$ is reduced to bounding covering numbers $\mathcal{N} (u; \mathcal{F}, \norm{.}_n)$.
    
    Now we apply the covering number properties in the aim to bound the Rademacher complexity.  We assume that we have the Lipschitz constant of the loss w.r.t. $\theta$ as $L_\theta$.  Beginning with \eqref{eqn:dud_partway} with wider bounds, then applying (in order) Equation \eqref{eqn:cov_num_2}, change of variables $\varepsilon = \frac{u}{L_\theta}$, Equation \eqref{eqn:cov_num_3}, change of variables $\varepsilon' = \frac{\varepsilon}{B_\theta}$, Equation \eqref{eqn:cov_num_1}, and finally changing integral upper bound to 1 since covering number of $B_2^k(1)$ for $\varepsilon > 1$ is 1, we obtain
    \begin{align}
        \mathcal{R}_{\mathcal{S}}(\mathcal{L} \circ \mathcal{F}) &\leq \frac{16}{\sqrt{n}} \int_0^\infty \sqrt{\log \mathcal{N} (u; \mathcal{F}, \norm{.}_n)} \, du, \\
        &\leq \frac{16}{\sqrt{n}} \int_0^\infty \sqrt{\log \mathcal{N} \left( \frac{u}{L_\theta}; \Theta, \norm{.}_2 \right) } \, du, \\
        &= \frac{16}{\sqrt{n}} L_\theta \int_0^\infty \sqrt{\log \mathcal{N} \left( \varepsilon; \Theta, \norm{.}_2 \right) } \, d\varepsilon, \\
        &\leq \frac{16}{\sqrt{n}} L_\theta \int_0^\infty \sqrt{\log \mathcal{N} \left( \frac{\varepsilon}{B_\theta}; B_2^k(1), \norm{.}_2 \right) } \, d\varepsilon, \\
        &= \frac{16}{\sqrt{n}} L_\theta B_\theta \int_0^\infty \sqrt{\log \mathcal{N} \left( \varepsilon; B_2^k(1), \norm{.}_2 \right) } \, d\varepsilon, \\
        &\leq \frac{16}{\sqrt{n}} L_\theta B_\theta \int_0^\infty \sqrt{k \log \left( 1 + \frac{2}{\varepsilon} \right) } \, d\varepsilon, \\
        &\leq \frac{16}{\sqrt{n}} L_\theta B_\theta \sqrt{\frac{k}{n}} \int_0^1 \sqrt{ \log \left( 1 + \frac{2}{\varepsilon} \right) } \, d\varepsilon, \\
        &\leq 22 L_\theta B_\theta \sqrt{\frac{k}{n}},
    \end{align}
    which when substituted into Theorem \ref{thm:generalization_rad}, directly produces the result of Theorem \ref{thm:generalization}.
\end{proof}

\section{Toy Problem Generalization Bound Derivations} \label{apx:toy_gen_bounds}
In this section, we derive the Lipschitz constants for the toy problem provided in Table \ref{table:toy_constants}.  Proceeding in order of presentation in the table, we first state that the Lipschitz constant of $f$ with respect to $\theta$ is $\frac{1}{\Delta t}$, as evidenced by the form of $f$ in \eqref{eqn:f_toy_def}.  For the Lipschitz constants of $g$, we refer to the form of $g$ in \eqref{eqn:g_toy_def}.  This expression clearly shows $L_{g, \lambda} = \frac{1}{m}$ and $L_{g, \theta} = 0$.

The remaining four Lipschitz constants rely on $h$ as defined in \eqref{eqn:h_toy_def}.  Since this includes $\Neg{.}$ and $\Pos{.}$ functions, we define $h$ as piece-wise continuous over the four combinations of positive and negative $\lambda$ and $\phi$, for $\phi = \phi(z, v') = z + v' \Delta t - \theta$.

\subsection{Lipschitz Constants of $h$}
\textbf{$\lambda < 0$ and $\phi < 0$}:  In this region, the definition of $h$ is equivalent to
\begin{align}
    h([z, v]^\intercal, v', \lambda) &= \frac{1}{2} \phi^2 + \frac{1}{2} \lambda^2. \label{eqn:h_region_1}
\end{align}
Thus,
\begin{align}
    \frac{d h}{d \lambda} &= \phi \cdot \frac{d\phi}{d \lambda} + \lambda = \lambda, \label{eqn:dh_dlam_1} \\
    \frac{d h}{d \theta} &= \phi \cdot \frac{d\phi}{d \theta} = -\phi, \label{eqn:dh_dth_1}
\end{align}

\textbf{$\lambda < 0$ and $\phi > 0$}:  In this region, the definition of $h$ is equivalent to
\begin{align}
    h([z, v]^\intercal, v', \lambda) &= \frac{1}{2} \lambda^2.  \label{eqn:h_region_2}
\end{align}
Thus,
\begin{align}
    \frac{d h}{d \lambda} &= \lambda, \label{eqn:dh_dlam_2} \\
    \frac{d h}{d \theta} &= 0, \label{eqn:dh_dth_2}
\end{align}

\textbf{$\lambda > 0$ and $\phi < 0$}:  In this region, the definition of $h$ is equivalent to
\begin{align}
    h([z, v]^\intercal, v', \lambda) &= \frac{1}{2} \phi^2.  \label{eqn:h_region_3}
\end{align}
Thus,
\begin{align}
    \frac{d h}{d \lambda} &= \phi \cdot \frac{d\phi}{d \lambda} = 0, \label{eqn:dh_dlam_3} \\
    \frac{d h}{d \theta} &= \phi \cdot \frac{d\phi}{d \theta} = -\phi, \label{eqn:dh_dth_3}
\end{align}

\textbf{$\lambda > 0$ and $\phi > 0$}:  In this region, the definition of $h$ is equivalent to
\begin{align}
    h([z, v]^\intercal, v', \lambda) &= \phi \cdot \lambda.  \label{eqn:h_region_4}
\end{align}
Thus,
\begin{align}
    \frac{d h}{d \lambda} &= \phi + \lambda \frac{d\phi}{d \lambda} = \phi, \label{eqn:dh_dlam_4} \\
    \frac{d h}{d \theta} &= \lambda \cdot \frac{d\phi}{d \theta} = -\lambda, \label{eqn:dh_dth_4}
\end{align}

We combine the behavior in all the above regions.  Taking the supremum of \eqref{eqn:dh_dlam_1}, \eqref{eqn:dh_dlam_2}, \eqref{eqn:dh_dlam_3}, and \eqref{eqn:dh_dlam_4}, we obtain $L_{h, \lambda} = \max \left\{ \phi_{\max}, \lambda_{\max} \right\}$.  Taking the supremum of \eqref{eqn:dh_dth_1}, \eqref{eqn:dh_dth_2}, \eqref{eqn:dh_dth_3}, and \eqref{eqn:dh_dth_4}, we obtain $L_{h, \theta} = \max \left\{ \phi_{\max}, \lambda_{\max} \right\}$.

\subsection{Lipschitz Constant of $\lambda^*_{\text{\normalfont nimp}}$ w.r.t. $\theta$}
The toy problem features an unconstrained $h$, and thus \eqref{eqn:lambda_theta_lipschitz_der} holds.  Thus we focus this analysis on deriving the second derivatives of $h$ in each of its smooth regions.  Recall in the naive implicit formulation that $h$ acts on $(x, g(x, \lambda), \lambda$ instead of $(x, y, \lambda)$ (see \eqref{eqn:nimp_h_min}).  Thus, we use $\phi = \phi(z, g([z, v]^\intercal, \lambda)) = z + v \Delta t - g \Delta t^2 + \frac{\Delta t}{m} \lambda - \theta$ for this section.

\textbf{$\lambda < 0$ and $\phi < 0$}:  Using the $h$ from \eqref{eqn:h_region_1},
\begin{align}
    \frac{\partial h}{\partial \lambda} = \lambda + \frac{\Delta t}{m} \left( z + v \Delta t - g \Delta t^2 + \frac{\Delta t}{m} \lambda - \theta \right), \quad 
    \frac{\partial^2 h}{\partial \lambda^2} = 1 + \frac{\Delta t^2}{m^2}, \quad
    \frac{\partial^2 h}{\partial \theta \partial \lambda} = - \frac{\Delta t}{m}.
\end{align}
By application of \eqref{eqn:lambda_theta_lipschitz_der},
\begin{align}
    \frac{d \lambda^*_{\text{nimp}}}{d \theta} &= \frac{\Delta t}{m} \cdot \frac{m^2}{m^2 + \Delta t^2} = \frac{m \Delta t}{m^2 + \Delta t^2}. \label{eqn:dlam_dth_nimp_1}
\end{align}

\textbf{$\lambda < 0$ and $\phi > 0$}:  Using the $h$ from \eqref{eqn:h_region_2},
\begin{align}
    \frac{\partial h}{\partial \lambda} = \lambda, \quad
    \frac{\partial^2 h}{\partial \lambda^2} = 1, \quad
    \frac{\partial^2 h}{\partial \theta \partial \lambda} = 0.
\end{align}
By application of \eqref{eqn:lambda_theta_lipschitz_der},
\begin{align}
    \frac{d \lambda^*_{\text{nimp}}}{d \theta} &= 0. \label{eqn:dlam_dth_nimp_2}
\end{align}

\textbf{$\lambda > 0$ and $\phi < 0$}:  Using the $h$ from \eqref{eqn:h_region_3},
\begin{align}
    \frac{\partial h}{\partial \lambda} = \frac{\Delta t}{m} \left( z + v \Delta t - g \Delta t^2 + \frac{\Delta t}{m} \lambda - \theta \right), \quad
    \frac{\partial^2 h}{\partial \lambda^2} = \frac{\Delta t^2}{m^2}, \quad
    \frac{\partial^2 h}{\partial \theta \partial \lambda} = - \frac{\Delta t}{m}.
\end{align}
By application of \eqref{eqn:lambda_theta_lipschitz_der},
\begin{align}
    \frac{d \lambda^*_{\text{nimp}}}{d \theta} &= \frac{m}{\Delta t}. \label{eqn:dlam_dth_nimp_3}
\end{align}

\textbf{$\lambda > 0$ and $\phi > 0$}:  Using the $h$ from \eqref{eqn:h_region_4},
\begin{align}
    \frac{\partial h}{\partial \lambda} = \frac{\Delta t}{m} \lambda + z + v \Delta t - g \Delta t^2 + \frac{\Delta t}{m} \lambda - \theta, \quad
    \frac{\partial^2 h}{\partial \lambda^2} = \frac{2 \Delta t}{m}, \quad
    \frac{\partial^2 h}{\partial \theta \partial \lambda} = -1.
\end{align}
By application of \eqref{eqn:lambda_theta_lipschitz_der},
\begin{align}
    \frac{d \lambda^*_{\text{nimp}}}{d \theta} &= \frac{m}{2 \Delta t}. \label{eqn:dlam_dth_nimp_4}
\end{align}
Taking the supremum of \eqref{eqn:dlam_dth_nimp_1}, \eqref{eqn:dlam_dth_nimp_2}, \eqref{eqn:dlam_dth_nimp_3}, and \eqref{eqn:dlam_dth_nimp_4}, we obtain $L_{\lambda, \theta}^{(\text{nimp})} = \max \left\{ \frac{m \Delta t}{m^2 + \Delta t^2}, \frac{m}{\Delta t} \right\}$.

\subsection{Lipschitz Constant of $\lambda^*_{\text{\normalfont vimp}}$ w.r.t. $\theta$}
Here we return to $\phi = \phi(z, v') = z + v' \Delta t - \theta$.  We otherwise consider the objective in \eqref{eqn:loss_imp}, which includes $\norm{y - g}^2 = \left( v' - v + g \Delta t - \frac{1}{m} \lambda \right)^2$ in addition to the contribution from $h$ scaled by $\frac{1}{\epsilon}$.  First, we note that $\norm{y - g}^2$ is smooth in $\lambda$ and $\phi$, and thus we can find its partial derivatives without splitting into regions and taking a supremum.  Thus we begin,
\begin{align}
    \frac{\partial}{\partial \lambda} \left( \norm{y - g}^2 \right) &= -\frac{2}{m} \left( v' - v + g \Delta t - \frac{1}{m} \right), \\
    \frac{\partial^2}{\partial \lambda^2} \left( \norm{y - g}^2 \right) &= \frac{2}{m^2}, \\
    \frac{\partial^2}{\partial \theta \partial \lambda} \left( \norm{y - g}^2 \right) &= 0.
\end{align}
Next we consider $h$ in the following regions, noting that these differ from those derived in the previous section due to the now simplified version of $\phi$.

\textbf{$\lambda < 0$ and $\phi < 0$}:  Using the $h$ from \eqref{eqn:h_region_1},
\begin{align}
    \frac{\partial^2 h}{\partial \lambda^2} = 1, \qquad
    \frac{\partial^2 h}{\partial \theta \partial \lambda} = 0. \label{eqn:h_partials_1}
\end{align}

\textbf{$\lambda < 0$ and $\phi > 0$}:  Using the $h$ from \eqref{eqn:h_region_2},
\begin{align}
    \frac{\partial^2 h}{\partial \lambda^2} = 1, \qquad
    \frac{\partial^2 h}{\partial \theta \partial \lambda} = 0. \label{eqn:h_partials_2}
\end{align}

\textbf{$\lambda > 0$ and $\phi < 0$}:  Using the $h$ from \eqref{eqn:h_region_3},
\begin{align}
    \frac{\partial^2 h}{\partial \lambda^2} = 0, \qquad
    \frac{\partial^2 h}{\partial \theta \partial \lambda} = 0. \label{eqn:h_partials_3}
\end{align}

\textbf{$\lambda > 0$ and $\phi > 0$}:  Using the $h$ from \eqref{eqn:h_region_4},
\begin{align}
    \frac{\partial^2 h}{\partial \lambda^2} = 0, \qquad
    \frac{\partial^2 h}{\partial \theta \partial \lambda} = -1. \label{eqn:h_partials_4}
\end{align}

Applying \eqref{eqn:lambda_theta_lipschitz_der} and using the piece-wise partial derivatives of $h$ from \eqref{eqn:h_partials_1}, \eqref{eqn:h_partials_2}, \eqref{eqn:h_partials_3}, and \eqref{eqn:h_partials_4}, we obtain that $\frac{d\lambda^*_{\text{vimp}}}{d \theta}$ is 0 everywhere unless both $\lambda$ and $\phi$ are positive, in which case $\frac{d\lambda^*_{\text{vimp}}}{d \theta} = \frac{m^2}{2 \epsilon}$.  Thus $L_{\lambda, \theta}^{(\text{vimp})} = \frac{m^2}{2 \epsilon}$.

\section{Proofs for Section \ref{sec:graph_dist}}
\subsection{Proof of Lemma \ref{lemma:NaiveLossVSGraphDistance}}\label{apx:lemma_2_proof}
As $[x_i, f^\theta(x_i)] \in \Graph$, the first inequality follows directly:
	\begin{equation}
		\GraphDistance(x_i, y_i)^2 = \min_{x} \norm{\begin{bmatrix}
		x - x_i \\ f^\theta(x) - y_i
	\end{bmatrix}}^2 \leq \norm{\begin{bmatrix}
		0 \\ f(x_i) - y_i
	\end{bmatrix}}^2 = l^\theta_{\text{\normalfont exp,}}(x_i,y_i)\,.
	\end{equation}
	To prove the second inequality, we first apply the Lipschitz condition of $f^\theta$ at $x_i$ to give the following:
	\begin{align}
		\Graph \subseteq \bar{\Graph} = \Brace{\begin{bmatrix}
			x \\ y
		\end{bmatrix} : \norm{y - f^\theta(x_i)} - L_f \norm{x - x_i} \leq 0} \,.
	\end{align}
	We can therefore lower bound $\GraphDistance^2$ as
	\begin{align}
		\GraphDistance^2(x_i, y_i) &\geq \min_{[x;\; y] \in \bar \Graph} \norm{\begin{bmatrix}
		x - x_i \\ y - y_i
	\end{bmatrix}}^2\,. \label{eq:GraphDisanceOuterApproximation}
	\end{align}
	We can write the KKT conditions of \eqref{eq:GraphDisanceOuterApproximation} with Lagrange multiplier $\gamma \geq 0$ as
	\begin{align}
		0 &= \begin{bmatrix}
			x - x_i \\ y - y_i
		\end{bmatrix} + \gamma \begin{bmatrix}
			-L_f\frac{x - x_i}{\norm{x - x_i}} \\ \frac{y - f^\theta(x_i)}{\norm{y - f^\theta(x_i)}}
		\end{bmatrix}\,,\\
		0 &\geq \norm{y - f(x_i)} - L_f \norm{x - x_i}\,,\\
		0 &= \gamma\Paren{\norm{y - f^\theta(x_i)} - L_f \norm{x - x_i}}\,.
	\end{align}
	When $y_i \neq f^\theta(x_i)$, these conditions are solved by
	\begin{align}
		x^* &= x_i + \hat{r} \frac{L_f}{1 + L_f^2} \norm{y_i - f^\theta(x_i)}\,,\\
		y^* &= f(x_i) + L_f \norm{x^* - x_i}\frac{y_i - f^\theta(x_i)}{\norm{y_i - f^\theta(x_i)}}\,,\\
		\gamma &= \frac{\norm{x^* - x_i}}{L_f}\,,
	\end{align}
	where $\hat r$ is an arbitrary unit direction. Substituting the formulas for $x^*$ and $y^*$ into \eqref{eq:GraphDisanceOuterApproximation} yields the final result.
\subsection{Proof of Lemma \ref{lemma:GraphMinimizesViolationLoss}} \label{apx:lemma_4_proof}
Given the definition of $\ViolationLoss$ in \eqref{eqn:loss_imp}, we have that
	$$\min_{x,y} \, \ViolationLoss(x,y) = \min_{\lambda \in \Lambda, x,y} \norm{g^\theta(x, y, \lambda)}^2 + \frac{h^\theta(x,y,\lambda)}{\epsilon}.$$
	Therefore, $\ViolationLoss(x,y) \geq 0$ everywhere as $h^\theta(x,y,\lambda) \geq 0$ everywhere. By the definition of $\Graph$, if and only if $(x, y) \in \Graph$, then there exists a $\lambda 
	\in \Lambda$ such that $g^\theta(x,\lambda) = y$ and $h^\theta(x,y,\lambda) = 0$. Therefore $\ViolationLoss(x,y) = 0$ if and only if $(x, y) \in \Graph$.

\section{Graph Distance for Toy Problem Derivations} \label{apx:graph_dist_toy}
Here, we examine the quadratic growth rate of the violation implicit loss $\ViolationLoss(x, y)$ for the the 1D point-mass system.

Consider a particular datapoint $x = [z;\; v]$ and $y = v'$. For notational convenience, we define the following quantities:
\begin{align*}
	d_v &= v' - v + g\Delta t\,,\\
	\phi' &= z + v' \Delta t - \theta\,,\\
	d_z &= \Neg{z + (v' - d_v) \Delta t - \theta}\,,\\
	d_z' &= -\phi'.
\end{align*}
We note that for any choice of $(x,y)$, we have that $([z + d_z^-;\; v], v' - d_v) \in \Graph$, as the perturbed state is in freefall.

Plugging into the definition of the violation implicit loss,
\begin{equation}
\ViolationLoss(x, y) = \min_\lambda \norm{d_v - \frac{\lambda}{m}}^2 + \frac{1}{\epsilon}\Paren{\frac{1}{2}\Neg{\lambda}^2 + \frac{1}{2}\Neg{\phi'}^2 + \Pos{\lambda}\Pos{\phi'}}.\label{eq:SubstitutedViolationLoss}
\end{equation}
Noting that the objective is strictly convex w.r.t. $\lambda$, we now examine three cases for $\ViolationLoss(x, y)$ depending on the optimal selected force $\lambda^*$:
\subsubsection{$\lambda^* < 0$}
In this first case, the objective in \eqref{eq:SubstitutedViolationLoss} is continuously differentiable in $\lambda$ at $\lambda^*$, and we must have that
\begin{equation}
	\frac{\mathrm d}{\mathrm d \lambda} \norm{d_v + \frac{\lambda}{m}}^2 + \frac{1}{2\epsilon}\Paren{\Neg{\phi'}^2 + \lambda^2}\Bigg |_{\lambda^*} = 0.
\end{equation}
Solving the above yields
\begin{align}
	\lambda^* &= \frac{2d_v}{m\Paren{\frac{1}{\epsilon} + \frac{2}{m^2} } }\label{eq:ExplicitLambdaStarNegative} \,,\\
	\ViolationLoss(x, y) &= \frac{d_v^2 m^2}{m^2 + 2 \epsilon} + \frac{1}{2\epsilon} \Neg{\phi'}^2.
\end{align}
We note that from \eqref{eq:ExplicitLambdaStarNegative} that $d_v < 0$ in this case, and thus
\begin{equation}
	 0 \leq d_z = \Neg{\phi' - d_v\Delta t} < \Neg{\phi'}.
\end{equation}
Finally, selecting $(\bar x, \bar y) = ([z + d_z;\; v], v' - d_v) \in \Graph$, we have that
\begin{align}
	\ViolationLoss(x, y) &\geq \frac{\min\Paren{\frac{m^2}{\frac{m^2}{2} + \epsilon}, \frac{1}{\epsilon} } }{2}\Paren{d_v^2 + \Paren{d_z}^2} \,,\\
	&= \frac{\min\Paren{\frac{m^2}{\frac{m^2}{2} + \epsilon}, \frac{1}{\epsilon} } } {2}\norm{\begin{bmatrix}
		x - \bar x \\ y - \bar y
	\end{bmatrix}}^2.\label{eq:NegLambdaQG}
\end{align}

\subsubsection{$\lambda^* = 0$}
As $\lambda^* = 0$ is the unique minimizing input, we have that $d_v \geq 0$, as $\lambda^*$ would otherwise be negative as shown by equation \eqref{eq:ExplicitLambdaStarNegative}.
$d_v \geq 0$ also gives
\begin{equation}
	0 \leq \Neg{\phi'} \leq d_z = \Neg{\phi' - d_v\Delta t} \leq \Neg{\phi'} + d_v\Delta t.
\end{equation}
We select $(\bar x, \bar y) = ([z + d_z;\; v], v' - d_v) \in \Graph$. In the case where $\Neg{\phi'} \leq d_v\Delta t$ we have that $(d_z)^2 \leq (2\Delta t)^2d_v^2$
\begin{equation}
	\ViolationLoss(x, y) \geq \norm{d_v}^2 =\frac{\frac{2}{1 + (2\Delta t)^2}}{2}\norm{\begin{bmatrix}
		x - \bar x \\ y - \bar y
	\end{bmatrix}}^2 .\label{eq:ZeroLambdaSmallPhiQG}
\end{equation}
If instead we have $\Neg{\phi'} \geq d_v \Delta t$, then $\Neg{\phi'} \geq \frac{d_z}{2}$ and
\begin{align}
	\ViolationLoss(x, y) &= \norm{d_v}^2 + \frac{1}{2\epsilon}\Neg{\phi'}^2\,,\\
	&\geq \norm{d_v}^2 + \frac{1}{2\epsilon}\Paren{\frac{d_z}{2}}^2\,,\\
	&\geq \frac{\min(2, \frac{1}{4\epsilon})}{2}\norm{\begin{bmatrix}
		x - \bar x \\ y - \bar y
	\end{bmatrix}}^2.\label{eq:ZeroLambdaNegativePhiQG}
\end{align}
\subsubsection{$\lambda^* > 0$}
In this final case, the objective in \eqref{eq:SubstitutedViolationLoss} is again continuously differentiable in $\lambda$ at $\lambda^*$, and
\begin{equation}
	\frac{\mathrm d}{\mathrm d \lambda} \norm{d_v - \frac{\lambda}{m}}^2 + \frac{1}{\epsilon}\Paren{\frac{1}{2}\Neg{\phi'}^2 + \lambda\Pos{\phi'}}\Bigg |_{\lambda^*} = 0
\end{equation}
Solving the above yields
\begin{equation}
	\lambda^* = m\Paren{d_v - \frac{m\Pos{\phi'}}{2\epsilon}}\,,
\end{equation}
which from our assumption $\lambda^* > 0$ implies $d_v > \frac{m}{2\epsilon}\Pos{\phi'}$.
Substituting the explicit form of $\lambda^*$ above back into $\ViolationLoss(x, y)$ yields
\begin{align}
	\ViolationLoss(x, y) = \frac{m\Pos{\phi'}}{2\epsilon}\Paren{2d_v - \frac{m\Pos{\phi'}}{2\epsilon}} + \frac{\Neg{\phi'}^2}{2\epsilon}.\label{eq:ExplicitPositiveLambdaLoss}
\end{align}
Consider $(\bar x, \bar y) = ([z + d_z';\; v], v')$. as $d_v > 0$, we then have that
\begin{align}
	f(\bar x) &= v - g\Delta t + \Pos{-v + g\Delta t + \frac{\theta - z - d_z'}{\Delta t}}\,,\\
	 & = v - g\Delta t + \Pos{-v + g\Delta t + \frac{\theta - z - \theta + z + v'\Delta t}{\Delta t}}\,,\\
	 & = v - g\Delta t + \Pos{-v + g\Delta t + \frac{v'\Delta t}{\Delta t}}\,,\\
	 & = v - g\Delta t + \Pos{d_v}\,,\\
	 & = v - g\Delta t + d_v\,,\\
	 & = v'\,,
\end{align}
and thus $(\bar x, \bar y)\in \Graph$.  In the case that $\phi' \leq 0$, \eqref{eq:ExplicitPositiveLambdaLoss} reduces to $\frac{(d_z')^2}{2\epsilon}$ and thus
\begin{equation}
	\ViolationLoss(x, y) \geq \frac{\epsilon^{-1}}{2}\norm{\begin{bmatrix}
		x - \bar x \\ y - \bar y
	\end{bmatrix}}^2.\label{eq:PosLambdaNegPhiQG}
\end{equation}
In the alternative case that $\phi' > 0$, we have $d_z' = -\Pos{\phi}$, and therefore
\begin{align}
	\ViolationLoss(x, y) &= \frac{m\Pos{\phi'}}{2\epsilon}\Paren{2d_v - \frac{m\Pos{\phi'}}{2\epsilon}}\,,\\
	& \geq \Paren{\frac{m\Pos{\phi'}}{2\epsilon} }^2\,,\\
	& = \Paren{\frac{md_z^+}{2\epsilon} }^2\,,\\
	& = \frac{\frac{m^2}{2\epsilon}}{2^2}\norm{\begin{bmatrix}
		x - \bar x \\ y - \bar y
	\end{bmatrix}}^2.\label{eq:PosLambdaPosPhiQG}
\end{align}
Finally, combining all of the cases \eqref{eq:NegLambdaQG}, \eqref{eq:ZeroLambdaSmallPhiQG}, \eqref{eq:ZeroLambdaNegativePhiQG}, \eqref{eq:PosLambdaNegPhiQG}, \eqref{eq:PosLambdaPosPhiQG}, we have that the following quadratic growth inequality:
\begin{equation}
	\ViolationLoss(x,y) \geq \frac{1}{2}\min\Paren{\frac{m^2}{\frac{m^2}{2} + \epsilon},\frac{2}{1 + (2\Delta t)^2},\frac{1}{4\epsilon},\frac{\frac{1}{2}m^2}{\epsilon}} \norm{\Dist{\begin{bmatrix}
		x \\ y
	\end{bmatrix} , \Graph}}^2 .
\end{equation}
By the definition of $\mu$-QG, if $\Delta t \leq \frac{1}{2}$  we then know that $\ViolationLoss(x,y)$ is $1$-QG as long as we select
\begin{align}
    \epsilon \leq \min\Paren{\frac{1}{4}, \frac{m^2}{2}}.
\end{align}

\end{document}